  \providecommand\BibTeX{{%
    \normalfont B\kern-0.5em{\scshape i\kern-0.25em b}\kern-0.8em\TeX}}}
\theoremstyle{plain}
\newtheorem{theorem}{Theorem}
\newtheorem{proposition}{Proposition}
\newtheorem{lemma}{Lemma}
\newcommand{\argmax}{\mathrm{argmax}}
\newcommand{\abs}[1]{\left| #1 \right|}
\newcommand{\bOne}[1]{\mathds{1} \! \left\{#1\right\}}
\newcommand{\bracket}[1]{\left(#1\right)}
\newcommand{\EE}[1]{\mathbb{E} \left[#1\right]}
\newcommand{\PP}[1]{\mathbb{P} \left(#1\right)}
\newcommand{\myalg}{TI-UCB}
\begin{document}


\title{Which LLM to Play? Convergence-Aware Online Model Selection with Time-Increasing Bandits}

\author{Yu Xia}
\authornote{Both authors contributed equally to this work.}
\affiliation{%
  \institution{Shanghai Jiao Tong University}
  \city{Shanghai}
  \country{China}
}
\affiliation{%
  \institution{University of Michigan}
  \city{Ann Arbor}
  \state{MI}
  \country{USA}
}
\email{xiayuu@umich.edu}

\author{Fang Kong}
\authornotemark[1]
\affiliation{%
  \institution{Shanghai Jiao Tong University}
  \city{Shanghai}
  \country{China}
}
\email{fangkong@sjtu.edu.cn}

\author{Tong Yu}
\affiliation{%
  \institution{Adobe Research}
  \city{San Jose}
  \state{CA}
  \country{USA}
}
\email{tyu@adobe.com}

\author{Liya Guo}
\affiliation{%
  \institution{Tsinghua University}
  \city{Beijing}
  \country{China}
}
\email{gly22@mails.tsinghua.edu.cn}

\author{Ryan A. Rossi}
\affiliation{%
  \institution{Adobe Research}
  \city{San Jose}
  \state{CA}
  \country{USA}
}
\email{ryrossi@adobe.com}

\author{Sungchul Kim}
\affiliation{%
  \institution{Adobe Research}
  \city{San Jose}
  \state{CA}
  \country{USA}
}
\email{sukim@adobe.com}

\author{Shuai Li}
\authornote{Corresponding Author}
\affiliation{%
  \institution{Shanghai Jiao Tong University}
  \city{Shanghai}
  \country{China}
}
\email{shuaili8@sjtu.edu.cn}

\renewcommand{\shortauthors}{Yu Xia et al.}

\begin{abstract}
Web-based applications such as chatbots, search engines and news recommendations continue to grow in scale and complexity with the recent surge in the adoption of large language models (LLMs). 
Online model selection has thus garnered increasing attention due to the need to choose the best model among a diverse set while balancing task reward and exploration cost. 
Organizations faces decisions like whether to employ a costly API-based LLM or a locally finetuned small LLM, weighing cost against performance. 
Traditional selection methods often evaluate every candidate model before choosing one, which are becoming impractical given the rising costs of training and finetuning LLMs. 
Moreover, it is undesirable to allocate excessive resources towards exploring poor-performing models. 
While some recent works leverage online bandit algorithm to manage such exploration-exploitation trade-off in model selection, they tend to overlook the increasing-then-converging trend in model performances as the model is iteratively finetuned, leading to less accurate predictions and suboptimal model selections.

In this paper, 
we propose a time-increasing bandit algorithm {\myalg}, which effectively 
predicts the increase of model performances due to training or finetuning and efficiently balances exploration and exploitation in model selection. To further capture the converging points of models, we develop a change detection mechanism by comparing consecutive increase predictions.
We theoretically prove that our algorithm achieves a logarithmic regret upper bound
in a typical increasing bandit setting, which implies a fast convergence rate. The advantage of our method is also empirically validated through extensive experiments on classification model selection and online selection of LLMs.
Our results highlight the importance of utilizing increasing-then-converging pattern for more efficient and economic model selection in the deployment of LLMs.

\end{abstract}

\begin{CCSXML}
<ccs2012>
   <concept>
       <concept_id>10010147.10010178.10010205.10010212</concept_id>
       <concept_desc>Computing methodologies~Search with partial observations</concept_desc>
       <concept_significance>500</concept_significance>
       </concept>
   <concept>
       <concept_id>10010147.10010257.10010282.10010284</concept_id>
       <concept_desc>Computing methodologies~Online learning settings</concept_desc>
       <concept_significance>500</concept_significance>
       </concept>
 </ccs2012>
\end{CCSXML}

\ccsdesc[500]{Computing methodologies~Search with partial observations}
\ccsdesc[500]{Computing methodologies~Online learning settings}

\keywords{Model Selection; Online Learning; Multi-Armed Bandit; Large Language Model}



\maketitle

\begin{figure}[t!]
    \centering
    \includegraphics[width=0.44\textwidth]{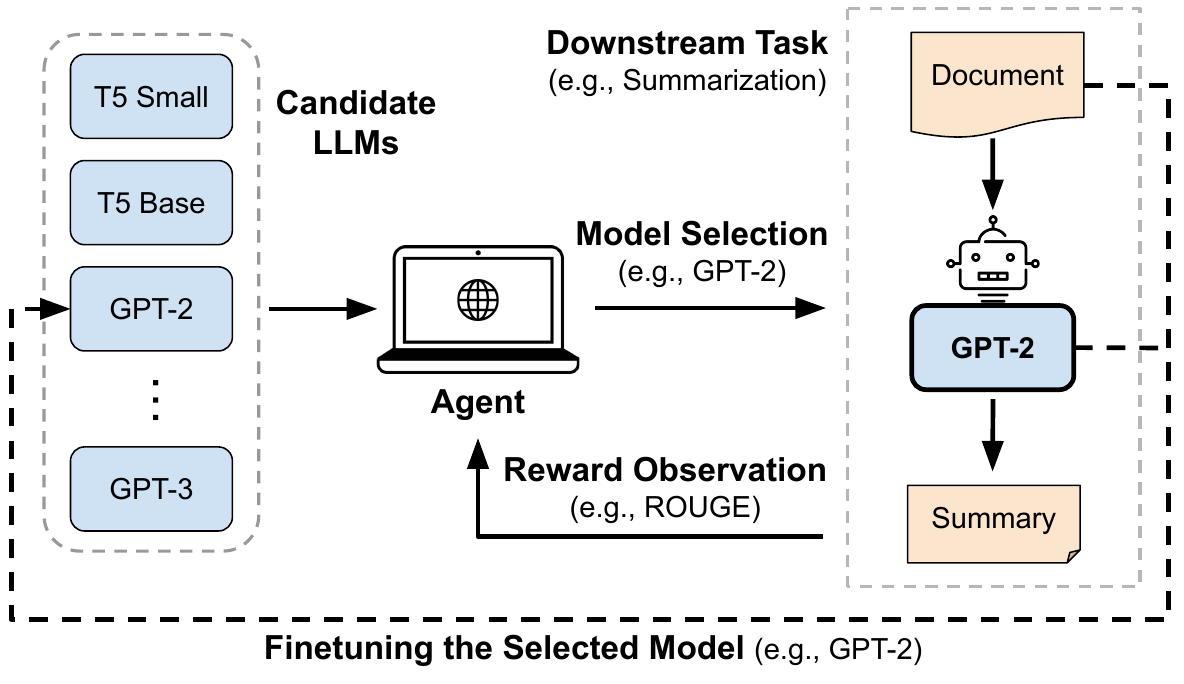}
    \vspace{-0.7em}
    \caption{An illustrative example of online model selection for LLM summarization.}
    \label{fig:intro}
    \vspace{-0.8em}
\end{figure}

\section{Introduction}\label{sec:introduction}
Recent times have witnessed the promising adoption of large-scale pretrained models such as large language models (LLMs) \cite{radford2019language, brown2020language, raffel2020exploring, chung2022scaling, openai2023gpt4} in various web-based applications including online chatbots, search engines and news recommendations. 
While different models may exhibit different tasks-specific advantages,
the increasing number of high-performing models particularly LLMs recently has resulted in growing interest in the online model selection problem \cite{NIPS2017_a2186aa7, NEURIPS2019_433371e6, li2020efficient, karimi2021online, metelli2022stochastic}. 
The goal of online model selection is to choose the best model from a diverse set, maximizing the reward on a specific task while minimizing the exploration cost in the selection process, e.g., the economic trade-offs of LLMs \cite{howell-etal-2023-distilled}.

For example, organizations today face decisions like whether to employ a costly API-based LLM or a local small LLM being fine-tuned over time. The choice hinges on factors such as cost-effectiveness and robust performance in practical applications \cite{howell-etal-2023-distilled}. API-based LLMs \cite{brown2020language, openai2023gpt4}, while offering impressive zero-shot performance, typically charge based on usage. In contrast, small locally-maintainable LLMs \cite{radford2019language, raffel2020exploring} would be much cheaper for heavy usage and potentially more performant after being sufficiently finetuned.

Existing LLM selection schemes \cite{salazar2020masked, owodunni2023koya, peng2023check} primarily exploit the best-performing model in a static setting, e.g. choosing the LLM that generates answers with lowest perplexity score at the current state \cite{peng2023check}.
Prior works on more general model selections lie in parameter-free online learning \cite{orabona2014simultaneous, NIPS2017_a2186aa7, cutkosky2017online} and in the field of Automated Machine Learning (AutoML) \cite{kotthoff2019auto, feurer2020auto, liu2020admm, li2020efficient, heuillet2021sequential}, e.g. treating the selection of model as a new hyperparameter to be optimized by Bayesian optimization \cite{kotthoff2019auto, feurer2020auto}. These schemes, though effective, often require full information or comprehensive evaluation of models. This may be impractical for recent large-scale models such as LLMs, where training or finetuning could be costly and thus spending excessive resources on exploring poor-performing models is undesirable.  
Thus a trade-off between exploration, i.e. learning about models' performances being trained or finetuned, and exploitation, i.e. selecting the best-performing one currently, is needed. 
Some recent works \cite{li2020efficient, karimi2021online, pmlr-v139-cella21a, metelli2022stochastic} have proposed to manage the trade-off in online model selection by formulating it as a multi-armed bandit problem, which is similar to our problem setting. However, they tend to overlook 
the increasing-then-converging trend in model performances as models are iteratively trained or finetuned when they are selected, which is illustrated by the process shown in Figure \ref{fig:intro} and the curve in Figure \ref{fig:curve}. As a result, these methods often make inaccurate predictions on future model performances and thus make suboptimal selections of models.


To address the above limitations, we propose an increasing bandit algorithm, Time-Increasing UCB ({\myalg}), which can promisingly predict and capture the increasing-then-converging pattern of model performances and efficiently balances the exploration and exploitation in online model selection.
Specifically, {\myalg} adopts a least square estimator to piecewise-linearly approximate the increasing trend from past reward observations. 
To further capture the converging points, we develop a sliding-window change detection mechanism by comparing consecutive increase predictions.
We provide a theoretical analysis of our proposed method and prove {\myalg} achieves a logarithmic regret upper bound
in a typical increasing-then-converging bandit problem, which implies a fast convergence rate. We also empirically validate the advantage of our method in performances and parameter robustness through synthetic and real-world experiments of online model selection for classification models and LLMs.

In summary, we make the following major contributions: 
\begin{itemize}
    \item Motivated by the need for efficient and economic online model selection (e.g., selecting the best-performing LLMs being finetuned while minimizing the cost), we formulate it as time-increasing bandit problem with increasing-then-converging trend to balance the exploration and exploitation.

    \item Capitalizing the increasing-then-converging model performance trend, we propose the {\myalg} bandit algorithm, which can promisingly predict the reward increase and capture the converging points with a change detection mechanism comparing consecutive increase predictions. 
    
    \item We theoretically prove that {\myalg} achieves a logarithmic regret upper bound 
    in a typical increasing bandit problem, which implies a fast convergence rate.

    \item We empirically validate the advantages of {\myalg} through extensive experiments of online selection of classification models as well as LLMs. Our results highlight the importance of utilizing increasing-then-converging trend for more efficient and economic model selection in LLM deployment.
\end{itemize}

\section{Related Work}\label{sec:related_work}
In this work, we study online model selection formulated as a bandit problem, which is closely related to following two lines of works.
\subsection{Online Model Selection}
Online model selection has been drawing attention in choosing the best one among the increasing number of high-performing models, e.g. LLMs, with limited training resources and performance evaluations \cite{NIPS2017_a2186aa7, NEURIPS2019_433371e6, li2020efficient, karimi2021online, metelli2022stochastic}.
Existing LLM selection schemes \cite{salazar2020masked, owodunni2023koya, peng2023check} primarily focus on static model selection without considering the model performance change due to iterative finetuning. For example, \citet{peng2023check} choose the LLM that generates texts with lowest perplexity score at the current state \cite{peng2023check}.
While several prior works on more general model selections have studied the parameter-free online learning \cite{orabona2014simultaneous, NIPS2017_a2186aa7, cutkosky2017online}, most of them assume full information in the online setting, which could be impractical when training and evaluation costs are high and thus only the feedback from the selected model might be observable. 
Some recent works have explored online model selection with partial information. 
\citet{NEURIPS2019_433371e6} consider the model selection problem with contextual bandit feedback. 
\citet{pmlr-v139-cella21a} formulate online model selection as a rested bandit problem. 
\citet{karimi2021online} utilize active learning to select the best model among a pool of pre-trained classifiers. 

The online model selection problem is also closely related to the wider field of AutoML, whose objective is to automate the entire process of applying machine learning to real-world problems \cite{feurer2015efficient, hutter2019automated}. 
A similar application of AutoML to online model selection is hyperparameter optimization \cite{kotthoff2019auto, feurer2020auto, liu2020admm, li2020efficient, heuillet2021sequential}. 
The commonly adopted Bayesian optimization frameworks \cite{kotthoff2019auto, feurer2020auto} for hyperparameter selections treat the selection of the model as a new hyperparameter to be optimized. 
While these approaches are faced with efficiency problems resulting from the huge parameter space, recent works \cite{li2017hyperband, falkner2018bohb, li2020efficient, metelli2022stochastic} have leveraged bandit algorithms to balance exploration and exploitation for more efficient online model selection. 
While many approaches have been proposed as listed above, most of them consider a static setting and overlook the increasing-then-converging model performance as models are trained or finetuned alongside the model selection process. In comparison, our work emphasize and utilize the such increasing-then-converging pattern due to model finetuning when selected, as illustrated in Figure \ref{fig:intro}.

\subsection{Non-Stationary Bandits}
The increasing-then-converging reward trend poses the challenge of non-stationarity in online model selection, which is also closely relaetd to non-stationary bandit problem. Non-stationary bandits typically feature rewards that are either piecewise-stationary \cite{garivier2011upper, auer2019adaptively, trovo2020sliding, liu2018change, besson2022efficient, cao2019nearly} or smoothly-changing \cite{besbes2014stochastic, bouneffouf2016multi, russac2019weighted}. In piecewise-stationary environments, the reward distribution is piecewise-constant and changes abruptly at unknown time points. 
Existing methods often use certain selection criteria for recent observations \cite{garivier2011upper, auer2019adaptively, trovo2020sliding} or change-detection techniques to focus on observations after change points \cite{liu2018change, besson2022efficient, cao2019nearly}.
\citet{garivier2011upper} propose D-UCB and SW-UCB, with the former using a discount factor for past rewards and the latter employing an adaptive sliding window for recent observations. \citet{cao2019nearly} apply a sliding window to detect abrupt changes, while \citet{liu2018change} use cumulative sums for change point detection and UCB index updates.
Smoothly-changing environments, on the other hand, describe situations where rewards change continuously. \citet{besbes2014stochastic} propose Rexp3, a modification of the Exp3 algorithm for adversarial MABs, based on known reward variation. \citet{bouneffouf2016multi} and \citet{russac2019weighted} develop UCB-based algorithms for known reward changing trends. \citet{trovo2020sliding} present SW-TS, a Thompson Sampling-based algorithm with a sliding window, for environments with smoothly-changing rewards. Though considering non-stationary, these works have no guaranteed performance in cases of increasing-then-converging reward trends exhibited in online model selection problem.

Our setting focuses on rested bandits, where arm rewards depend on the number of times the arm is pulled, e.g., the model performances depend on the number of times the model is trained or fine-tuned. This setting, introduced by \cite{tekin2012online}, has been further explored in a subcategory called rested rotting bandits \cite{levine2017rotting, seznec2020single, seznec2019rotting}, where rewards decrease with each pull. Recently, there has been discussion on the counterpart setting with increasing rewards \cite{heidari2016tight, li2020efficient, pmlr-v139-cella21a, metelli2022stochastic}. However, existing approaches still have certain limitations. 
For example, \citet{heidari2016tight} assume accurate reward feedback without stochasticity, 
while \citet{li2020efficient} solve the hyperparameter optimization problem in a similar deterministic setting. 
Although \citet{pmlr-v139-cella21a} consider a non-deterministic increasing reward scenario, they assume knowledge of the parametric form of the reward. 
Among the approaches above, \citet{metelli2022stochastic} consider stochastic bandits with non-decreasing and concave rewards and utilizes a sliding window to focus on most recent observations, which is the mostly closely related work to our setting. 
However, such recent-observation-based method may not be able to accurately capture the increasing-then-converging pattern exhibited in online mode selection problem, and thus lead to suboptimal reward predictions and slow reactions to converging points. In comparison, our proposed {\myalg} predicts the increasing reward and adaptively capture the converging point, which demonstrates advantages over R-ed-UCB both theoretically and empirically in online model selection problem.
\section{Problem Formulation}\label{sec:problem_formulation}
In this section, we first introduce the setting in online model selection, which is similar to \cite{li2020efficient, karimi2021online, pmlr-v139-cella21a, metelli2022stochastic}. Then, we formulate it as time-increasing bandits with increasing-then-converging trend, where we novelly capture the model performance trend.

\begin{figure}[t!]
    \centering
    \includegraphics[width=0.30\textwidth]{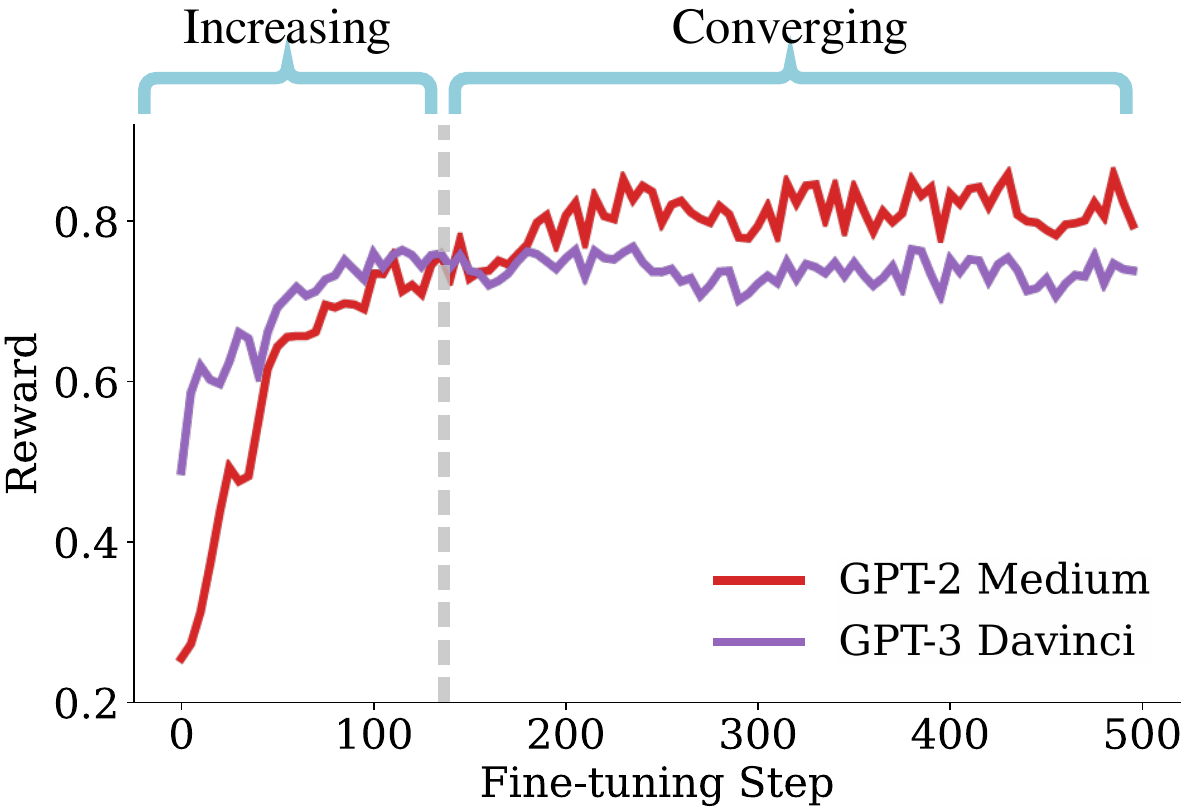}
    \vspace{-0.6em}
    \caption{Increasing-then-converging reward trends of an API-based LLM (GPT-3 Davinci) and a local small LLM (GPT-2 Medium) over finetuning steps on a text summarization dataset. The reward considers both model performance and finetuning cost as detailed in Section \ref{sec:LLM}. GPT-2 Medium is observed to outperform GPT-3 Davinci after certain finetuning steps and hence such reward trends make it non-trivial to apply existing methods for online model selection.}
    \label{fig:curve}
    \vspace{-1.3em}
\end{figure}

\subsection{Online Model Selection}
The online model selection process can be described with the following three steps forming a feedback loop. For consistent illustration as in Figure \ref{fig:intro}, we use the online selection of LLMs for text summarization as describing examples for each step below. 
\paragraph{Model Selection} 
Among a set of candidate LLMs, the agent first selects an LLM to deployed for a document summarization task according to a selection policy. 


\paragraph{Reward Observation}
Given a test document, the selected LLM generates a summary of the document. Comparing the generated summary with the reference summary, an evaluation score measuring summarization quality, i.e., model performance, is computed. The agent then observe the evaluation score as the reward of the previous model selection and correspondingly update its selection policy for next round of model selection.

\paragraph{Model Finetuning}
After the agent receives the reward, the selected LLM is fine-tuned with the test document-summary pair in previous step. The finetuned LLM will then be in the candidate pool for the next round of model selection as in the first step.

The goal of the agent is to efficiently select the best LLM being finetuned, yielding the highest cumulative reward in a long run.

\subsection{Time-Increasing Bandits}\label{sec:problem_inc}

We formulate the online model selection as a time-increasing bandit problem with increasing-then-converging trend, capturing the model performance trend overlooked in previous bandit formulations \cite{li2020efficient, karimi2021online, pmlr-v139-cella21a, metelli2022stochastic}. 
Suppose there are $K$ arms (e.g., candidate LLMs) in the environment. Without loss of generality, we denote $i$ to be the $i$-th one among $K$ arms. The agent will interact with the environment for $T$ rounds.
At each time step $t$, the agent pulls an arm $A_t\in [K]$ and observes a reward $X_{A_t,t}$, which is a random variable drawn from a probability distribution with expectation $\mu_{A_t}(t)$.
Note that in our setting, the expected reward $\mu_{i}(t)$ of arm $i$ at time $t$ depends on the number of times that $i$ is pulled (e.g., the LLM performance depends on the number of times the model is fine-tuned). Thus, denoting $n_i(t) = \sum_{s=1}^t \mathbf{1}\{A_s = i\}$ as the total number of pulls on arm $i$ till the end of time step $t$, we define the reward of arm $i$ at time step $t$ as 
$$\mu_{i}(t) = \mu_{i, n_i(t)}\;.$$
Similarly, we define the empirical mean on arm $i$'s expected reward at time step $t$ as 
\begin{equation}
    \begin{aligned}
        \hat{\mu}_i(t) = \hat{\mu}_{i, n_i(t)} = \frac{1}{n_i(t)}\sum_{s=1}^t \mathbf{1}\{A_s = i\} X_{A_s,s}\;.
    \end{aligned}
\end{equation}
Suppose the expected reward of arm $i$ first increases with the number of its pulls until it reaches the stable stage (e.g. The LLM performance increases with the number of times it is finetuned until converging).
Denote $\nu_i$ as the converging point that the reward of arm $i$ becomes stable. The value of $\nu_i$ could be different across arms (e.g., LLMs may have different convergence rate when finetuned). Note that $\nu_i$ is unknown to the agent and is a value that needs to be learned by the agent. A typical challenging time-increasing bandit example is shown in Figure \ref{fig:curve}, where GPT-3 Davinci gives higher reward with strong zero-shot performance but is outperformed by GPT-2 Medium due to its high finetuning cost despite the performance improvement.

The goal of the agent in time-increasing bandit is to maximize the cumulative reward in a long run, which is equivalent to minimize the cumulative regret.
We define the cumulative expected regret as 
\begin{equation}\label{eq:cum_regret}
    \begin{aligned}
    \EE{R(T)} = \sum_{i=1}^{K} \sum_{s=1}^{n_i^\ast(T)} \mu_{i,s} - \sum_{i=1}^{K} 
\EE{\sum_{s=1}^{n_i(T)}\mu_{i,s}}\,,    
    \end{aligned}
\end{equation}
where $n_i^{\ast}(T)$ is the total number of times arm $i$ is pulled in the optimal action sequence that maximizes the reward with a greedy strategy and $n_i(T)$ is the actual number of times that arm $i$ has been selected by the agent till time step $T$. 
\section{Proposed Method}\label{sec:algorithm}
In Section \ref{sec:tiucb_alg}, we introduce the proposed Time-increasing UCB ({\myalg}) algorithm for the online model selection problem formulated in Section \ref{sec:problem_formulation}. Then in Section \ref{sec:regret} we provide the regret upper bound of {\myalg} in a typical increasing-then-converging setting.

\subsection{TI-UCB Algorithm}\label{sec:tiucb_alg}
The {\myalg} is described in Algorithm \ref{alg:TI-UCB}.
In our setting, the reward of each arm is non-stationary and changes each time the arm is pulled. Therefore, to maximize the cumulative reward and thus to minimize the cumulative regret, the optimal arm should be chosen and explored at the early stage, followed by further exploitation at the later stage.
Since the reward will later reach a stable state at $\nu_i < T$, $i = 1,\cdots, K$ as defined in Section \ref{sec:problem_inc} and observed in Figure \ref{fig:curve}, e.g., the increasing-then-converging performance trend of LLMs, the algorithm also aims to detect change points of each arm.

To achieve the above goals, {\myalg} comprises two phases. The first phase is the increase prediction with an upper confidence bound, which corresponds to Line \ref{alg:line_2}-\ref{alg:line_7} of Algorithm \ref{alg:TI-UCB}. The second phase is the change detection, which corresponds to Line \ref{alg:line_8}-\ref{alg:line_12} of Algorithm \ref{alg:TI-UCB}. We describe the two phases in detail as below.

\begin{algorithm}[t!] 

\renewcommand{\algorithmicrequire}{\textbf{Input:}}

\renewcommand\algorithmicensure {\textbf{Output:} }

\caption{TI-UCB} 

\label{alg:TI-UCB} 
\begin{spacing}{1.1}
\begin{algorithmic}[1] 

\REQUIRE ~~\\ 

$K$, $\delta$, window size $\omega$, threshold $\gamma$; 
\ENSURE ~~\\ 
\textbf{Initialize:} $\tau_i'\leftarrow 1, n_i \leftarrow 0,  \forall  i\in [K]$; \\
\FOR{$t=1,...,T$}
    \FOR{$i=1,...,K$}\label{alg:line_2}
        \STATE $\bar{\mu}_{i,n_i(t)} = \hat{\mu}_{i,n_i(t)} + 16\sqrt{\frac{2\ln(1/\delta)}{n_i(t)}}$; \label{alg:value}
    \ENDFOR
         \STATE Pull arm $A_{t} \leftarrow \argmax_i\, \bar{\mu}_{i,n_i(t)} $; \label{alg: choose}
         \STATE Observe reward $X_{A_t,t}$; \label{alg:reward}
         \STATE Update estimation $\hat{\mu}_{i,n_i(t)}$; \label{alg: update}
         \STATE Update number of pulls $n_{A_t}(t) \leftarrow n_{A_t}(t)+1$; \label{alg:line_7}

         \IF{$n_{A_t}(t)\geq 2\omega$}\label{alg:line_8}
            \IF {$|\hat{\mu}_{w_1, A_t}(t+1) - \hat{\mu}_{w_2, A_t}(t+1)|>\frac{\gamma}{2}$ for arm $A_t$ } \label{alg: detect}
            \STATE $\tau_{A_{t}}' \leftarrow t$ and $n_{A_t}(t) \leftarrow 1$;
            \ENDIF
         \ENDIF \label{alg:line_12}
\ENDFOR

\end{algorithmic}
\end{spacing}
\end{algorithm}

\subsubsection{Increase Prediction}
Recall that $n_i(t)$ is the number of pulls on arm $i$ by the end of time $t$. We approximate the initial increasing trend of each arm's reward as $$\hat{\mu}_{i,n_i(t)} = \hat{a}_{i,n_i(t)} \cdot n_i(t)+\hat{b}_{i,n_i(t)}\;,$$
where $\hat{a}_{i,n_i(t)}$ is the approximated reward growth rate of arm $i$ and $\hat{b}_{i,n_i(t)}$ is the intercept term, both of which are calculated using the least square method and linear regression with observation records. In each time step $t$, the algorithm first updates $\hat{a}_{i,n_i(t-1)}$ and $\hat{b}_{i,n_i(t-1)}$ based on previous reward observations, and then predicts the reward of current time step $\hat{\mu}_{i,n_i(t)}$ based on the estimated increasing trend. 

With the increase prediction, {\myalg} seeks to balance exploration and exploitation by adding an uncertainty term to the predicted reward of each arm as Line \ref{alg:value} of Algorithm \ref{alg:TI-UCB}. The concentration level of the approximated coefficients in the uncertainty term is derived from Proposition \ref{sec:concen:thm}. The algorithm then chooses the arm $A_t$ with the maximum upper confidence value $\bar{\mu}_{i,n_i(t)}$, receives a reward $X_{A_{t},t}$ and update the observation records of arm $i$ as decribed in Line \ref{alg: choose}-\ref{alg: update} of Algorithm \ref{alg:TI-UCB}. Note that to simplify the notation, we use $\hat{\mu}_{i,n_i(t)}$ as $\hat{\mu}_i(t)$, and $\bar{\mu}_{i,n_i(t)}$ as $\bar{\mu}_i(t)$ in the rest of the paper.

\begin{proposition}
\label{sec:concen:thm}
The upper confidence bound in TI-UCB for arm $i$ can be defined as
\begin{equation*}
\label{UCB bound}
    \bar{\mu}_i(t-1,\delta) =
    \left\{
    \begin{aligned}
        &\infty,  &&{ \text{if }\, n_i(t-1) = 0}\,,  \\
        &\hat{\mu}_i(t-1) + 16\sqrt{\frac{2\ln(1/\delta)}{n_i(t-1)}}, && {\text{otherwise,}}
    \end{aligned}
    \right.
\end{equation*}
Then for any $\delta\in(0, 1)$, $\mu \leq \hat{\mu} + 16\sqrt{\frac{2\ln(1/\delta)}{n}}$ holds with probability at least $1-\delta$.
Detailed proof is provided in Appendix \ref{proof:concen}.
\end{proposition}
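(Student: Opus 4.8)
The plan is to exploit the fact that the prediction $\hat\mu$ is an \emph{affine} function of the least-squares coefficients, $\hat\mu = \hat a\, n + \hat b$, and that each of $\hat a$ and $\hat b$ is in turn a fixed linear combination of the observed rewards $X_1,\dots,X_n$ of the arm (fixed because the design points, i.e. the pull indices $x_s=s$, are deterministic). Consequently the deviation splits into a slope-error term and an intercept-error term, $\hat\mu - \mu = (\hat a - a)\,n + (\hat b - b)$, and it suffices to concentrate each coefficient separately and then combine them, which is exactly the ``concentration level of the approximated coefficients'' referred to in the text preceding the statement.

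First I would write out the normal equations for the design $x_s=s$, $s=1,\dots,n$, giving the closed forms $\bar x=(n+1)/2$ and $\sum_s (x_s-\bar x)^2 = n(n^2-1)/12$, and read off explicit weights $w_s^{(a)}=(x_s-\bar x)/\sum_k(x_k-\bar x)^2$ for $\hat a=\sum_s w_s^{(a)} X_s$ together with the corresponding weights $w_s^{(b)}=1/n - \bar x\,w_s^{(a)}$ for $\hat b$. In the increasing regime the expected reward is linear in the pull count, so ordinary least squares is unbiased, $\EE{\hat a}=a$ and $\EE{\hat b}=b$; hence $\hat a-a=\sum_s w_s^{(a)}(X_s-\mu_s)$ and $\hat b-b$ are zero-mean weighted sums of the conditionally sub-Gaussian noise $X_s-\mu_s$. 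Because the pulls of the arm generate a filtration, each sum is a martingale with bounded increments, so an Azuma--Hoeffding (equivalently Hoeffding, under independence) tail bound gives $\PP{\hat a - a > \epsilon} \le \exp(-\epsilon^2/(2\sigma^2\sum_s (w_s^{(a)})^2))$ and similarly for $\hat b$.

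The next step is to bound the weight norms in closed form using $\sum_s(x_s-\bar x)^2=n(n^2-1)/12$: the slope weights satisfy $\sum_s (w_s^{(a)})^2 = 1/\sum_k(x_k-\bar x)^2 = O(1/n^3)$ while the intercept weights give $\sum_s (w_s^{(b)})^2 = 1/n + \bar x^2/\sum_k(x_k-\bar x)^2 = O(1/n)$, so that $n\,(\hat a - a)$ and $(\hat b-b)$ each concentrate at rate $1/\sqrt n$. Inverting the two tail bounds at level $\delta$, taking a union bound over the slope and intercept events, and collecting the extrapolation factor $n-\bar x=(n-1)/2$ together with the sub-Gaussian parameter then yields a deviation of the form $C\sqrt{2\ln(1/\delta)/n}$; tracking the constants through the endpoint leverage gives the claimed $C=16$, and keeping only the upper tail produces the one-sided statement $\mu \le \hat\mu + 16\sqrt{2\ln(1/\delta)/n}$.

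I expect the main obstacle to be controlling the combined weight norm with tight enough constants: unlike a plain empirical mean, the regression \emph{prediction at the endpoint} $n$ sits at the boundary of the design and therefore carries the largest leverage, which inflates its variance and is precisely what forces the comparatively large constant $16$; obtaining this cleanly requires careful use of the closed form $n(n^2-1)/12$ and the extrapolation distance $(n-1)/2$ rather than crude bounds. A secondary subtlety is the exact-unbiasedness step, which holds only while the true mean is linear on the regression window; any curvature from the increasing-then-converging shape introduces a bias term, and I would argue this is controlled because the change-detection mechanism (Lines~\ref{alg:line_8}--\ref{alg:line_12}) resets the window at converging points so that each fit is taken over a near-linear segment.
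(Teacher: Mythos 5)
Your proposal is correct and rests on the same foundations as the paper's proof: the least-squares prediction is a linear functional of the observed rewards with deterministic weights, it is unbiased while the mean is linear in the pull count, and the closed forms $\bar s=(n+1)/2$ and $\sum_s(s-\bar s)^2=n(n^2-1)/12$ reduce everything to a sub-Gaussian tail bound. The genuine difference is structural: you concentrate $\hat a$ and $\hat b$ separately and recombine via a union bound, while the paper never splits the error at all --- it substitutes the normal equations into the prediction to get the single representation $\hat\mu_i(t+1)=\frac{1}{n}\sum_{s=1}^{n}\bigl[1+\frac{6}{n-1}(s-\bar s)\bigr]X_{i,s}$, observes that each weight times the bounded noise is at most the absolute constant $8$, and applies one Hoeffding-type bound (Corollary 5.5 of Lattimore and Szepesv\'ari, as cited in the appendix) to obtain $\mathbb{P}\bigl(\hat\mu_i(t+1)-\mu_i(t+1)\ge\varepsilon\bigr)\le\exp(-n\varepsilon^2/128)$, from which $16\sqrt{2\ln(1/\delta)/n}$ follows by inversion. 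The fused route buys two things: no union bound, so the claim holds with $\ln(1/\delta)$ for every $\delta\in(0,1)$ (your two-event version yields $\ln(2/\delta)$ and a sum of two constants, which gives the stated inequality only when the constant slack absorbs the $\ln 2$, e.g.\ in the operative regime $\delta\le 1/T$, but not literally as $\delta\to 1$); and no need to track the two weight norms and the extrapolation factor $(n-1)/2$ separately, since the crude uniform bound on the fused weights already delivers a constant of the right order. What your route buys in exchange is sharper accounting --- your exact leverage computation gives a deviation well below $16$, which of course still implies the proposition (the paper's own constants are deliberately loose, e.g.\ the jump from $128$ to $512$ in the exponent) --- plus a more careful treatment of two points the paper leaves implicit: the martingale (Azuma) justification for adaptively collected pulls, and the fact that unbiasedness holds only on a near-linear segment, which is precisely why the change-detection resets matter.
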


\subsubsection{Change Detection}
After certain amounts of arm pulls, different arms will reach a stable state with stable reward $c_i, i\in [K]$ at different time steps. 
To infer change points $\nu_i$ based on the observed rewards of each arm $i$,
we set two sliding windows $w_1$ and $w_2$ each with length $\omega$ to monitor rewards along the timeline, moving forward with more reward are observed for arm $i$. 

To detect reward changes, we compare the predicted reward at time step $t+1$ calculated based on the previous window of reward observations from $w_1 = \left[n_i(t)-2\omega + 1, n_i(t)-\omega\right]$, and based on the current window $w_2 = \left[n_i(t)-\omega + 1, n_i(t)\right]$, which we refer to as $\hat{\mu}_{w_1, i}(t+1)$ and $\hat{\mu}_{w_2, i}(t+1)$ respectively.
If the difference of the two predictions exceeds the preset threshold ${\gamma}/{2}$ as described in Line \ref{alg: detect} in Algorithm \ref{alg:TI-UCB}, we consider a reward change point of arm $i$ is detected and the observation records of arm $i$ will be re-initialized and the current time step will be recorded as $\tau_i'$ representing a change point. Otherwise, the algorithm continues to pull next arms, detecting change points with new reward observed. The rational of change detection is formalized in Proposition \ref{detect standard}.

\begin{proposition}
\label{detect standard}
The reward change point of arm $i$ is considered to be reached, if $$|\hat{\mu}_{w_1, i}(t+1) - \hat{\mu}_{w_2, i}(t+1)|>{\gamma}/{2}\;,$$
where $\hat{\mu}_{w_1, i}(t+1)$ and $\hat{\mu}_{w_2, i}(t+1)$ are the predicted rewards for arm $i$ at time $t+1$ calculated by observations in the window $w_1 = \left[n_i(t)-2\omega + 1, n_i(t)-\omega\right]$, and by observations in the window $w_2 = \left[n_i(t)-\omega + 1, n_i(t)\right]$. With $\gamma \leq \sqrt{\frac{2}{\omega} (14+\frac{12}{|\omega -1|})^2 \ln(\frac{2}{\delta})}$, the above change detection inequality is valid with probability $1-\delta$. Detailed proof is provided in Appendix \ref{proof:CD}.
\end{proposition}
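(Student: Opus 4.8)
The plan is to read the statement as a two-sided concentration bound for the gap statistic $D := \hat{\mu}_{w_1, i}(t+1) - \hat{\mu}_{w_2, i}(t+1)$ under the null hypothesis that arm $i$ is still on a single increasing linear segment, i.e. $\mu_{i,s} = a\,s + b$ corrupted by independent, zero-mean sub-Gaussian (or $[0,1]$-bounded) noise. First I would put each window prediction in closed form. Since $\hat{a}$ and $\hat{b}$ are the least-squares estimates, the extrapolated value at index $s^\ast = n_i(t)+1$ is $\hat{\mu}_w(s^\ast) = \bar{X}_w + \hat{a}_w (s^\ast - \bar{s}_w) = \sum_{s \in w} c_s X_s$ with weights $c_s = \frac{1}{\omega} + \frac{(s^\ast - \bar{s}_w)(s - \bar{s}_w)}{\sum_{r \in w}(r - \bar{s}_w)^2}$. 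Because $w_1$ and $w_2$ are disjoint and both fit the same line under the null, $\mathbb{E}[\hat{\mu}_{w_1}(s^\ast)] = \mathbb{E}[\hat{\mu}_{w_2}(s^\ast)] = a\,s^\ast + b$, so $\mathbb{E}[D] = 0$ and $D$ is exactly a mean-zero weighted sum of $2\omega$ independent noise terms.

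Next I would bound the variance proxy of $D$, which is where the explicit constant is born. For consecutive integer indices one has $\sum_{r \in w}(r - \bar{s}_w)^2 = \frac{\omega(\omega^2-1)}{12}$, which injects the factor $12$, and $\max_{s \in w}|s - \bar{s}_w| = \frac{\omega-1}{2}$, which injects the $(\omega-1)$ spread. The two extrapolation distances are $s^\ast - \bar{s}_{w_2} = \frac{\omega+1}{2}$ for the near window and $s^\ast - \bar{s}_{w_1} = \frac{3\omega+1}{2}$ for the far window, so the far window is roughly three times more sensitive. Bounding the per-window weights through $|c_s| \le \frac{1}{\omega} + \frac{(s^\ast - \bar{s}_w)\cdot\frac{\omega-1}{2}}{\omega(\omega^2-1)/12}$ and summing the two contributions yields a combined weight scale of the form $\frac{1}{\sqrt{\omega}}\bigl(14 + \frac{12}{|\omega-1|}\bigr)$, where the $14$ collects the leading slope-times-distance terms of the two windows (roughly $10$ from $w_1$ and $4$ from $w_2$) and the $\frac{12}{|\omega-1|}$ is the lower-order correction carrying the design-spread factor. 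This is the step I expect to be the main obstacle: obtaining the clean closed form of the least-squares prediction weights and then tracking constants through the two asymmetric windows without loosening the estimate beyond $14 + \frac{12}{|\omega-1|}$.

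Finally, since $D$ is a mean-zero weighted sum of independent sub-Gaussian rewards, I would apply a Hoeffding-type tail bound in the same spirit as Proposition~\ref{sec:concen:thm}, namely $\mathbb{P}(|D| > \varepsilon) \le 2\exp\!\bigl(-\frac{\varepsilon^2}{2V}\bigr)$ with $V$ the squared weight scale from the previous step (times the noise sub-Gaussian constant), then set $\varepsilon = \gamma/2$, equate the right-hand side to $\delta$, and solve. Rearranging $\gamma = \sqrt{8V\ln(2/\delta)}$ with $V$ of order $\frac{1}{\omega}\bigl(14 + \frac{12}{|\omega-1|}\bigr)^2$ reproduces exactly $\gamma \le \sqrt{\frac{2}{\omega}\bigl(14 + \frac{12}{|\omega-1|}\bigr)^2 \ln(2/\delta)}$, so under the no-change null the gap exceeds $\gamma/2$ with probability at most $\delta$; contrapositively, a genuine trend change of magnitude at least $\gamma$ is flagged with probability at least $1-\delta$. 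The only remaining care is to confirm that the noise is independent across the $2\omega$ pulls, so that the weighted sum is genuinely sub-Gaussian with the summed variance proxy, and that the re-initialization in Algorithm~\ref{alg:TI-UCB} keeps both windows inside a single regime.
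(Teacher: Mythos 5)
Your analysis of the ``change has not happened'' case is essentially the paper's own argument (Appendix~\ref{proof:CD}, first part): you write each window's least-squares extrapolation as a weighted sum $\sum_{s\in w} c_s X_s$ with exactly the weights the paper derives, observe the difference is mean-zero under the null because both windows fit the same line, track the design constants $\sum_r (r-\bar{s}_w)^2 = \omega(\omega^2-1)/12$ and the extrapolation distances $\frac{\omega+1}{2}$, $\frac{3\omega+1}{2}$ to get the scale $14+\frac{12}{|\omega-1|}$, and close with a Hoeffding bound --- this matches equation~\eqref{eq.12} and the choice of $\varepsilon_1$ in the paper.

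However, there is a genuine gap in how you handle the other half of the claim. The proposition is a statement about the \emph{detector}, and the paper's proof has two parts: (i) false-alarm control (no change $\Rightarrow$ statistic stays below $\gamma/2$ w.h.p.), which you proved, and (ii) detection power (change happens $\Rightarrow$ statistic exceeds $\gamma/2$ w.h.p.), which you dispatch with the single word ``contrapositively.'' That is not a contrapositive: bounding $\PP{|D|>\gamma/2}$ under the null says nothing about the probability that $|D|$ exceeds $\gamma/2$ under the alternative. The paper proves part (ii) separately (Section~\ref{proof:CD}, ``Change happens''): under linear growth the mean gap between the two windows' predictions is approximately $a_i\cdot\omega$, and assuming $a_i\cdot\omega > \gamma$, a triangle inequality together with Lemma~\ref{lem: rationality} (each window's prediction error is below $\gamma/4$ w.h.p.) forces the statistic above $\gamma - \gamma/4 - \gamma/4 = \gamma/2$. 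This omission also explains a sign discrepancy you glossed over: your false-alarm argument yields the requirement $\gamma \geq \sqrt{\frac{2}{\omega}\bigl(14+\frac{12}{|\omega-1|}\bigr)^2\ln(2/\delta)}$ (threshold large enough to suppress noise), whereas the proposition states $\gamma \leq \sqrt{\frac{2}{\omega}\bigl(14+\frac{12}{|\omega-1|}\bigr)^2\ln(2/\delta)}$ --- the upper bound on $\gamma$ is precisely what the power direction needs (threshold small enough that a true change of size $a_i\omega$ trips it), and the paper reconciles the two by setting $\varepsilon_1=\varepsilon_2=\gamma/2$ in equation~\eqref{detect_select}. So your claim that your rearrangement ``reproduces exactly'' the stated condition conflates the two directions; to complete the proof you would need to add the alternative-hypothesis concentration argument (re-apply your Hoeffding bound to $D$ centered at its nonzero mean $\approx a_i\omega$, or reproduce the paper's Lemma~\ref{lem: rationality} route) and the accompanying assumption that the mean shift dominates $\gamma$.
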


 \subsection{Regret Upper Bound of TI-UCB}
\label{sec:regret}
In this section, we provide the regret upper bound of {\myalg} in a typical increasing-then-converging reward setting as follows and detailed proof can be found in Appendix \ref{proof:regret}.

Specifically, we assume the reward received by each arm $i$ first increases linearly with the number of times it is pulled, and then abruptly changes to a stable value $c_i$ within the range of $[0, T]$, where $T$ remains unknown. Such reward trend approximately captures the increasing-then-converging performance pattern of LLMs in online model selection scenarios.

\begin{theorem} 
\label{sec:regret:thm}
Assume that $\delta \leq {1}/{T}$, then the expected regret of TI-UCB algorithm is bounded by
\begin{equation}
    \begin{aligned}
        \label{eq.regret}
    \EE{R(T)} \leq \sum_{i:n_i(T)\geq n_{i}^{\ast}(T)} c_i \frac{4096 \ln(T)}{\Delta_{\min}^2}  + K\left(\frac{2\pi^2}{3} + \omega + 2 + 2L \right) + 2,
    \end{aligned}
\end{equation}
where $\Delta_{\min} = \min\limits_{t\in[0,T] , i\neq i_t^\ast} \{ \mu_{i_t^\ast}(t) - \mu_i(t) \}$ is the minimum gap between the optimal reward and the true reward and $L$ is a constant smaller than $\ln T$. 
\end{theorem}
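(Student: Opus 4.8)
The plan is to decompose the cumulative regret into two sources: regret incurred during the increasing phase (before each arm's converging point $\nu_i$) and regret incurred during the stable phase (after convergence). The overall strategy follows the standard UCB analysis template, but adapted to the piecewise structure induced by the change-detection mechanism. First I would bound the number of times a suboptimal arm is pulled. For this, I would invoke the concentration result of Proposition \ref{sec:concen:thm} to guarantee that, with probability at least $1-\delta$, the true reward lies below the upper confidence value $\bar{\mu}_i(t)$. A suboptimal arm $i$ is pulled only when its inflated index exceeds that of the optimal arm; by the usual argument this forces $n_i(t)$ to be small, and solving the resulting inequality $16\sqrt{2\ln(1/\delta)/n_i(t)} \geq \Delta_{\min}/2$ yields a bound of order $\ln(1/\delta)/\Delta_{\min}^2$. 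Setting $\delta \leq 1/T$ converts $\ln(1/\delta)$ into $\ln T$, producing the leading term $\sum_i c_i \cdot 4096\ln(T)/\Delta_{\min}^2$ once each pull's regret is scaled by the stable reward $c_i$.

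Next I would handle the error events and the change-detection phase. The constant additive terms in \eqref{eq.regret} come from summing failure probabilities and from the overhead of the sliding-window mechanism. Specifically, the $2\pi^2/3$ factor per arm arises from a union bound over time of the concentration failures, summed as a convergent series $\sum_t t^{-2}$; the $\omega$ term accounts for the rounds needed to fill the two sliding windows of length $\omega$ each before a change can be detected; and the $2L$ term (with $L$ a constant smaller than $\ln T$) bounds the regret accrued in detecting and reacting to the single change point per arm. I would argue that after re-initialization at the detected change point, the reward is stable, so no further increasing-phase regret is incurred, and the post-convergence behavior reduces to a stationary bandit problem where the optimal arm is identified in constant expected additional pulls.

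The hard part will be rigorously controlling the interaction between the increase-prediction estimator and the change-detection trigger. Two coupled issues arise: first, I must ensure the least-squares linear estimator $\hat{\mu}_{i,n_i(t)}$ concentrates well enough that the detection test in Proposition \ref{detect standard} neither fires prematurely during the genuinely increasing phase (a false alarm, which would reset $n_i$ and inflate regret) nor fails to fire near the true converging point (a missed detection, which would leave the algorithm over-exploring a converged arm). Controlling the false-alarm rate requires showing that during the linear regime both windows $w_1$ and $w_2$ yield consistent slope predictions, so their difference stays below $\gamma/2$; controlling missed detections requires that the abrupt change to $c_i$ produces a prediction gap exceeding $\gamma/2$ with high probability, which is where the calibration $\gamma \leq \sqrt{(2/\omega)(14 + 12/|\omega-1|)^2\ln(2/\delta)}$ from Proposition \ref{detect standard} enters. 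I would therefore condition on a good event on which all concentration bounds and all detection decisions are correct across all arms and rounds, bound its complement probability by a sum that contributes only to the constant terms, and carry out the per-pull regret counting on the good event.

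A subtle point I would be careful about is the definition of regret in \eqref{eq:cum_regret}, which compares against the greedy optimal sequence $n_i^\ast(T)$ rather than a single fixed best arm; because rewards are non-stationary, the identity of the instantaneously optimal arm $i_t^\ast$ can shift over time. I would handle this by splitting the regret bound over arms with $n_i(T) \geq n_i^\ast(T)$ (the over-pulled arms, which is exactly the index set appearing in the theorem's leading sum) and arguing that arms pulled fewer times than optimal contribute non-positively or are absorbed into the lower-order terms. I expect the cleanest route is to bound the per-round suboptimality gap uniformly by $\Delta_{\min}$ from below and by $c_i$ (the maximum stable reward scale) from above, so that the total count of suboptimal pulls translates directly into the stated bound.
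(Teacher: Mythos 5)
Your overall architecture matches the paper's: split $[0,T]$ at the converging points, run a UCB-style counting argument during the increasing phase, and condition on a good event under which change detection neither fires prematurely (false alarm) nor misses the change by more than $\omega$ rounds — these are exactly the paper's events $F_i$ and $D_i$ — with the failure probabilities, the detection delay $\omega$, and the $2L$ term supplying the additive constants. However, there is a genuine gap in your accounting of the leading term. Your own counting inequality, $16\sqrt{2\ln(1/\delta)/n_i(t)} \geq \Delta_{\min}/2$, solves to $n_i(T) \leq 2048\ln(1/\delta)/\Delta_{\min}^2$, i.e.\ a leading constant of $2048$, not $4096$; scaling each pull's regret by $c_i$ does not change that constant. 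The missing factor of two cannot be recovered under your claim that ``the post-convergence behavior reduces to a stationary bandit problem where the optimal arm is identified in constant expected additional pulls'' — that claim is false. A stationary stochastic bandit with gap $\Delta_{\min}$ still forces $\Theta(\ln T/\Delta_{\min}^2)$ suboptimal pulls (Lai--Robbins), and an index policy such as TI-UCB incurs exactly such logarithmic regret on that phase. The paper's $4096 = 2 \times 2048$ arises precisely from running the UCB counting argument \emph{twice}: once on $[0,\nu_1]$ (regret before the first change point, Lemma \ref{sec:regret:UCB}) and once on the stationary tail $[\tau_K', T]$ after all $K$ change points have been detected and all records re-initialized, then summing the two logarithmic contributions; the same doubling produces the $2\pi^2/3$ from two copies of the $\pi^2/3$ concentration-failure series. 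As written, your argument proves a bound with $2048$ in place of $4096$ only if the stationary tail were free, which it is not; so either your constant is unjustified or your bound is wrong.

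A secondary point: for the interval between the first and last change points, the paper does not use one global good event but a recursion (Lemma \ref{sec:regret:inductive}): conditioned on $F_1 D_1$, the regret after $\tau_1'$ is bounded by the regret of the same problem restarted at $\tau_1'$, conditioned on $F_2 D_2$, and so on, each peel contributing at most $\omega + 2L$ (detection delay plus the $T(1-\mathbb{P}(F_i D_i)) \leq 2L$ failure term), for a total of $K\omega + 2KL$. Your single-good-event plan can be made rigorous, but you must still carry out this per-change-point bookkeeping to obtain the $K(\omega + 2L)$ terms, since the $K$ arms converge at different unknown times and each reset restarts the estimator. The repair of your proof is then mechanical: apply your counting lemma a second time to $[\tau_K', T]$, add the two logarithmic terms, and the stated constant $4096$ and the additive term $K(2\pi^2/3 + \omega + 2 + 2L) + 2$ follow.
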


\paragraph{Remark.} 
The main idea of our proof is to divide the $[0,T]$ period into two stages, $[0,\nu_i]$ and $[\nu_i,T]$, where $\nu_i$ is the reward converging point of arm $i$.
Define two events that
$F_i = \{\tau_i' > \nu_i\}$ and $D_i = \{\tau_i' \leq \nu_i+\omega\}$.
$F_i$ implies that the $i$-th change point can only be detected by the algorithm after change occurs, and $F_i^{c}$ means that the $i$-th change point is regarded as having occurred but actual change does not happen. Then the regret can be decomposed by
\begin{equation}
    \begin{aligned}
        \EE{R(T)} = \EE{R(T)\bOne{ F_1 }} + \EE{R(T)\bOne{ F_1^{c} }}\,.
    \end{aligned}
\end{equation}
The regret of the first stage is derived similarly as proving the regret upper bound for standard UCB algorithm with a generalization to increasing reward. The regret of the second stage involves a further discussion on whether the change happened or not with several lemmas introduced, which we refer to Appendix \ref{proof:regret} for more details.

From Theorem \ref{sec:regret:thm}, we get the regret upper bound of {\myalg} of $\mathcal{O}\left( \log(T)/\Delta_{\min}^2 \right)$, which implies a fast convergence rate. Note that \citet{metelli2022stochastic} study a similar rising bandit setting with a different regret objective. Compared with their problem-independent regret, our regret is problem-dependent which aims to offer more insights into practical LLM selections.

\section{Experiments}\label{sec:experiments}
In this section, we evaluate {\myalg} on both synthetic and real-world environments of online model selection for classification models and LLMs to validate its empirical performance. 

\begin{figure*}[t!]
  \centering
  \subfloat[2-Arm Bandits: Reward Functions]{\includegraphics[width=0.245\textwidth]{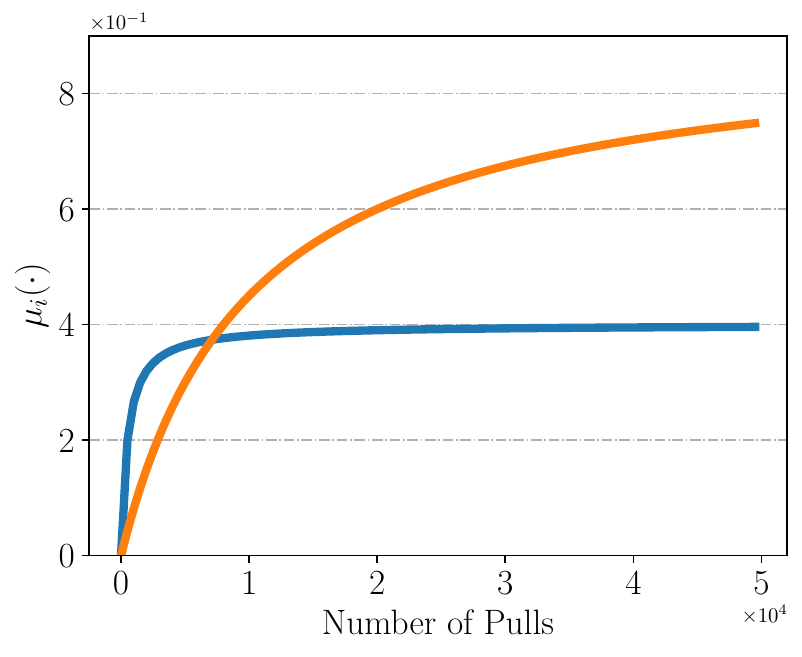}\label{fig:reward_funcs_2_arm}}
  \subfloat[2-Arm Bandits: Cumulative Regret]{\includegraphics[width=0.245\textwidth]{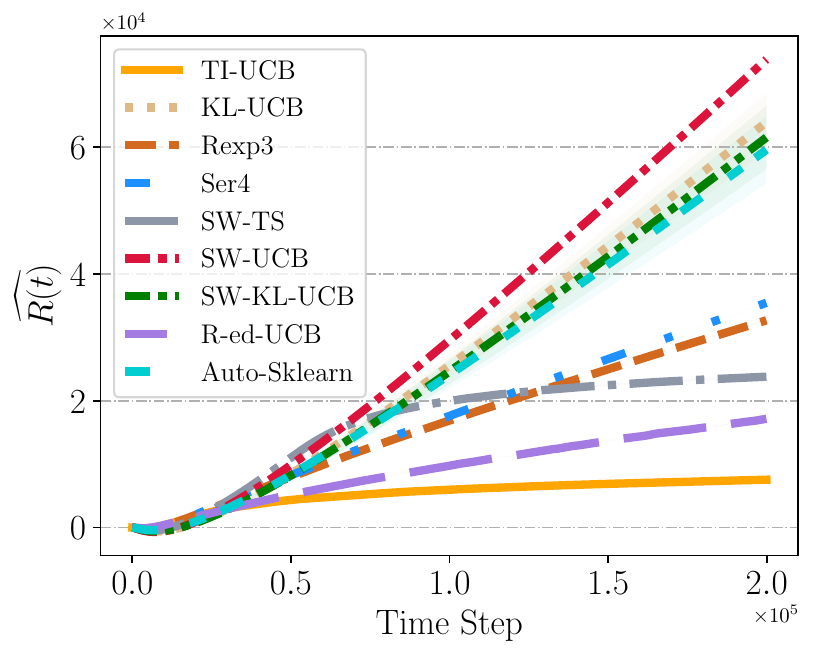}\label{fig:syn_regret_2_arm}}
  \subfloat[15-Arm Bandits: Reward Functions]{\includegraphics[width=0.245\textwidth]{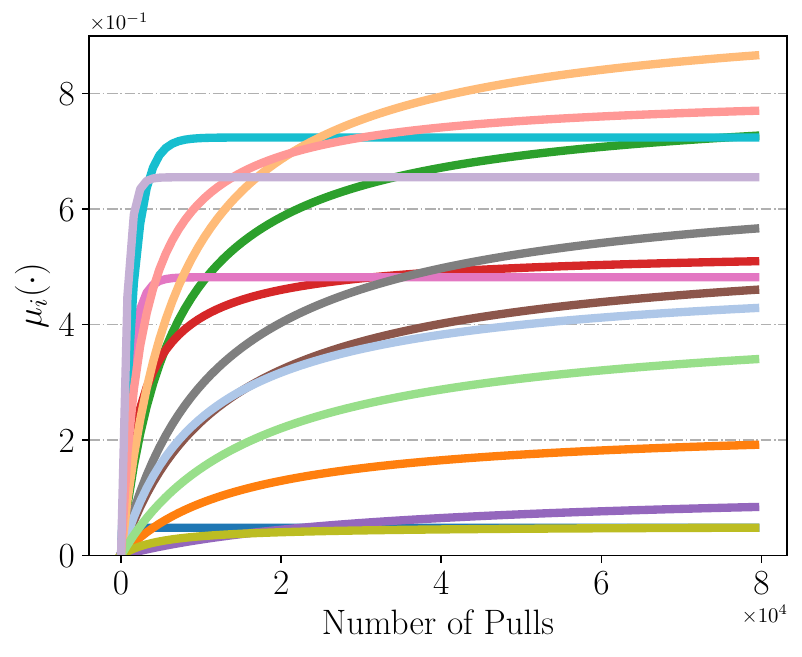}\label{fig:reward_funcs_15_arm}}
  \subfloat[15-Arm Bandits: Cumulative Regret]{\includegraphics[width=0.245\textwidth]{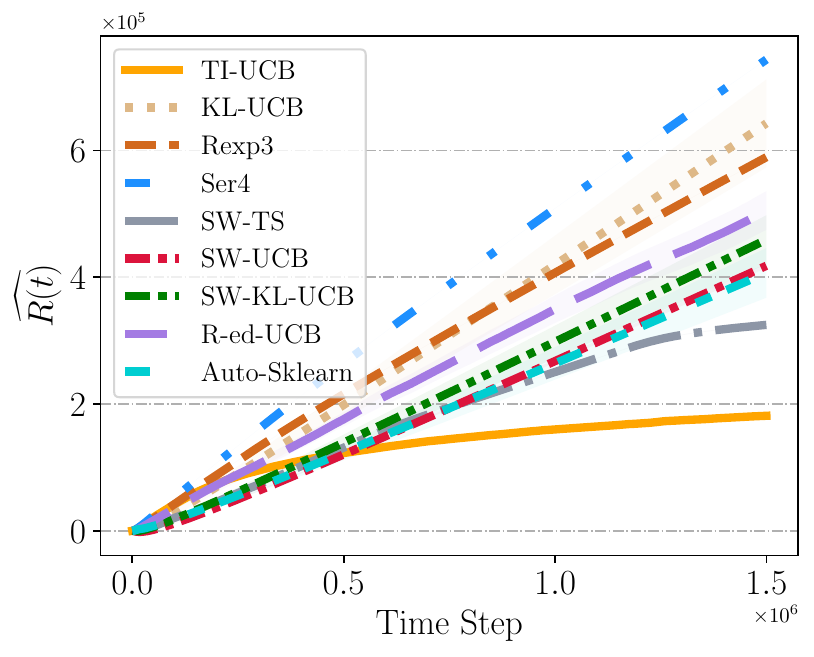}\label{fig:syn_regret_15_arm}}
  \vspace{-0.5em}
  \caption{Online selection of generated synthetic models covering a variety of increasing-then-converging patterns.}
  \vspace{-0.5em}
\end{figure*}

\subsection{Experimental Setup}
We first introduce the baselines to compare and parameter settings. Then we describe the evaluation metrics and research questions.

\subsubsection{Baselines}
We compare our proposed algorithm against the following baseline algorithms and methods: 
\begin{itemize}
    \item \textbf{KL-UCB} \cite{garivier2011kl}: a classic stationary bandit algorithm utilizing KL Divergence.
    \item \textbf{Rexp3} \cite{besbes2014stochastic}: a non-stationary bandit algorithm based on variation budget.
    \item \textbf{Ser4} \cite{allesiardo2017non}: a non-stationary bandit algorithm that takes into account the best arm switches during the process.
    \item \textbf{SW-TS} \cite{trovo2020sliding}: a sliding-window bandit algorithm with Thompson Sampling that generally handles non-stationary settings well.
    \item \textbf{SW-UCB} \cite{garivier2011upper}: a sliding-window bandit algorithm with UCB that can handle general non-stationary settings.
    \item \textbf{SW-KL-UCB} \cite{combes2014unimodal}: a sliding-window bandit algorithm with KL-UCB.
    \item \textbf{R-ed-UCB} \cite{metelli2022stochastic}: a recent non-stationary bandit algorithm designed for similar scenarios as ours with non-decreasing and concave rewards.
    \item \textbf{Auto-Sklearn} \cite{feurer2020auto}: the state-of-the-art AutoML system utilizing Bayesian optimization-based solution.
\end{itemize}

\subsubsection{Parameter Setting and Metric}
We use the recommended parameter settings from the respective papers for all baseline bandit algorithms. Details can be found in Appendix \ref{sec:app:para}. For Auto-Sklearn, while the system is designed to automate the model selection given full data in a static setting, it is not directly applicable in online learning setting. Instead, we update the optimizer of Auto-Sklearn every 50 steps with the batched data samples. Note that such approximation to online learning would lead to extra computation cost and time due to frequent optimization steps and thus we only use Auto-Sklearn as a representative AutoML method to be compared.
For our proposed {\myalg} algorithm, we set the change detection window $w = 100$ and threshold $\gamma = 0.3$ in all experiments if not specified otherwise. Note that the setting of parameters may be sub-optimal as we do not optimize all algorithms including {\myalg}.

To evaluate the metric in experiments, we mainly compare the algorithms based on empirical cumulative regret $\widehat{R(t)}$, which is the empirical counterpart of Equation \ref{eq:cum_regret} defined as
\begin{equation*}
\label{eq:empirical_regret} 
    \begin{aligned}
    \widehat{R(T)} = \sum_{i=1}^{K} \left[\sum_{s=1}^{n_i^\ast(T)} \hat{\mu}_{i,s} - \sum_{s=1}^{n_i(T)}\hat{\mu}_{i,s}\right]\,. 
    \end{aligned}
\end{equation*}
All experimental results are averaged over 20 independent runs.

\subsubsection{Research Questions}
The experiments are designed to answer the following research questions:

\begin{itemize}
    \item[\textbf{RQ1.}] Can our proposed {\myalg} algorithm outperform existing methods in online model selection? Is {\myalg} still effective when the increasing-then-converging reward trend are subject to fluctuations?
    \item[\textbf{RQ2.}] Can {\myalg} effectively handle the scenario where we introduce the finetuning cost of API-based LLMs in addition to model performance in reward design, and thus manage the economic tradeoff?
    \item[\textbf{RQ3.}] Is our change detection mechanism really effective in capturing the converging stage? How does different change detection window sizes affect the performance of {\myalg}?
\end{itemize}

\subsection{Synthetic Model Selection}
In this section, we describe the experimental results on a synthetic environment of online model selection.

\subsubsection{Data Generation}
We generate two sets of synthetic reward functions to simulate models with increasing-then-converging performance patterns. Specifically, we sample respectively 2 and 15 reward functions $\mu_i(\cdot)$ from the following two function families:
\begin{equation*}
    \begin{aligned}
    F_{\text {exp} } &=\left\{f(t)=c\left(1-e^{-a t}\right)\right\}\; \text{and}\\
    F_{\text {poly }} &=\left\{f(t)=c\left(1-b\left(t+b^{1 / \rho}\right)^{-\rho}\right)\right\}\;,
    \end{aligned}
\end{equation*}
where $a, c, \rho \in(0,1]$ and $b \in \mathbb{R} \geq 0$ are parameters whose values are selected randomly. These two families of functions are able to represent the general increasing-then-converging pattern of different shapes \cite{metelli2022stochastic}, where functions originating from $F_{\text {exp} }$ exhibit a rapid increase before converging, while those from $F_{\text{poly}}$ may display much slower growth rates. The generated reward functions of the 2-arm bandits and 15-arm bandits are shown in Figure \ref{fig:reward_funcs_2_arm} and \ref{fig:reward_funcs_15_arm} respectively. 
We introduce stochasticity by adding a Gaussian noise with a standard deviation of 0.1 in each reward observation. 

\subsubsection{Results}
The results are shown in Figure \ref{fig:syn_regret_2_arm} and \ref{fig:syn_regret_15_arm}, which plot the empirical cumulative regret over 200,000 iterations for 2-arm bandits and 1,500,000 iterations for 15-arm bandits.

\paragraph{Answer to \textbf{RQ1.}}
We can clearly observe that our proposed {\myalg} achieves the lowest regret at the horizon compared with all baselines. Though at the initial stage, some baselines such as SW-UCB outperform {\myalg}, they fail to explore sufficiently the optimal arm and converge to sub-optimal ones, resulting in linear regrets. 
From the results, we also observe that besides {\myalg}, the regret slope of SW-TS displays a trend of decreasing as well but the decrease is much later than {\myalg}. This indicates that SW-TS somehow reacts to the converging points of rewards but not as prompt as {\myalg}. 
Note that all algorithms except {\myalg} and R-ed-UCB do not have a theoretical guarantee on regret in such increasing-then-converging bandit setting. While R-ed-UCB has also guaranteed regret upper bound, it does not show consistent empirical advantages as {\myalg} does, which is also implied in \cite{metelli2022stochastic}. We further compare their performances in real-world environments in Section \ref{sec:classification} and \ref{sec:LLM}.

\begin{figure}[t!]
  \centering
  \subfloat[IMDB Bandits: Reward Functions]{\includegraphics[width=0.232\textwidth]{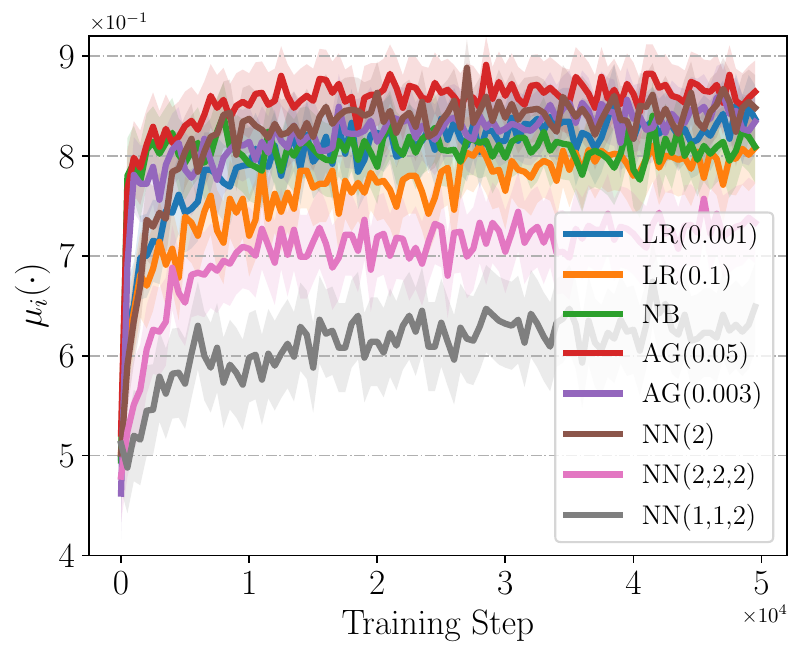}\label{fig:reward_funcs_imdb}}
  \subfloat[IMDB Bandits: Cumulative Regret]{\includegraphics[width=0.242\textwidth]{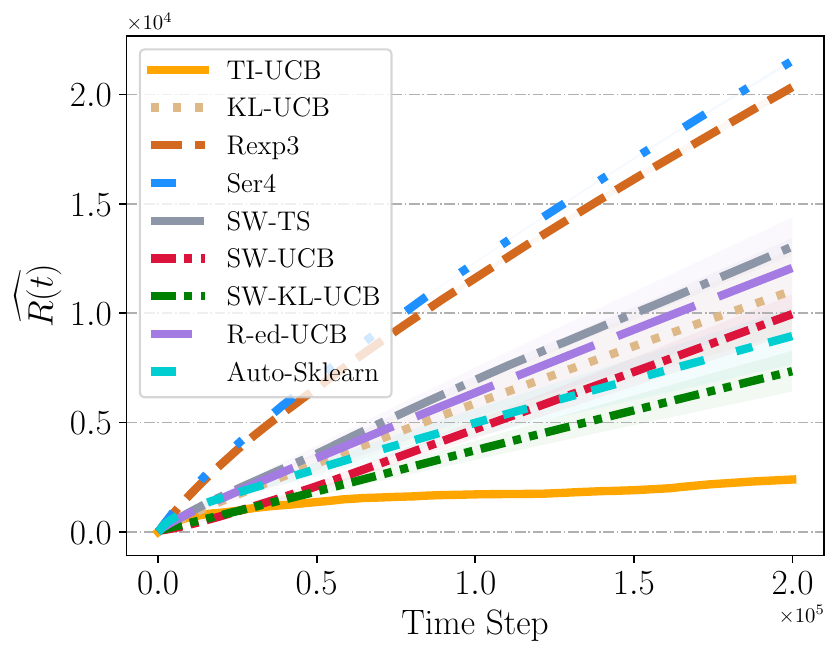}\label{fig:imdb_regret}}
  \vspace{-0.5em}
  \caption{Online selection of canonical classification models on IMDB datasets.}
  \vspace{-0.5em}
\end{figure}

\subsection{Classification Model Selection}\label{sec:classification}
In this section, we evaluate the performance of {\myalg} compared with baselines on a classification model selection task.

\subsubsection{Data Generation}
We build a binary classification problem on IMDB dataset preprocessed as \cite{maas2011learning}. The IMDB dataset consists of 50,000 reviews and after preprocessing each review $\boldsymbol{x}_t$ has a $d=1,000$ dimensional feature. Each arm corresponds to an online optimization model for binary classification. Following \cite{metelli2022stochastic}, we formulate an IMDB 8-arm bandits by choosing:
\begin{itemize}
    \item \textbf{LR(0.001)} and \textbf{LR(0.1)}: two online logistic regression models with learning rates of 0.001 and 0.1.
    \item \textbf{NB}: a naive bayes model.
    \item \textbf{AG(0.05)} and \textbf{AG(0.003)}: two adaptive gradient models with learning rates of 0.05 and 0.003.
    \item \textbf{NN(2)}, \textbf{NN(2,2,2)} and \textbf{NN(1,1,2)}: three neural network models, the first one consists of one layer with two nodes, the second one consists of three layers with two nodes in each layer, and the third one consists of three layers with one node in the first two layers and two nodes in the last layer.
\end{itemize}
In each round, a sample $\boldsymbol{x}_t$ is firstly randomly selected from the dataset. Then the agent selects an arm, i.e. classification model, to predict make a prediction $\hat{\boldsymbol{y_t}}$ on the sample and receives a binary reward 
$\mathbf{1}[\hat{y_t}=y_t]$
corresponding to whether the prediction is correct. After each arm pull, an online update is conducted to the chosen model with the selected sample $\boldsymbol{x}_t$. 
Note that here is slight abuse of notation where we consider an online update as a finetuning step.
We average 30 independent runs to visualize the reward trend of the above classification models as shown in Figure \ref{fig:reward_funcs_imdb}. Note that while the general increasing-then-converging patterns are observed, they are subject to different extent of fluctuations, which poses further challenges for accurate and efficient online model selection.

\subsubsection{Results}
The results are shown in Figure \ref{fig:imdb_regret}, which plots the empirical cumulative regrets over 200,000 iterations.

\paragraph{Answer to \textbf{RQ1.}}
As shown in Figure \ref{fig:imdb_regret}, {\myalg} outperforms all considered baselines at the horizon with a considerable improvement even with fluctuations in the reward trends. In comparison, R-ed-UCB has not converged to the optimal arm yet at the horizon. While Auto-Sklearn is outperformed by {\myalg}, it shows some improvements over some other baselines such Ser4 and Rexp3, the extra computation cost and time due to iterative optimizations make it less efficient to be applied in online model selection. The results again demonstrate the empirical effectiveness of {\myalg} in online model selection with fluctuations in reward trends. 

\begin{figure}[t!]
  \centering
  \subfloat[LLM Bandits: Reward Functions]{\includegraphics[width=0.233\textwidth]{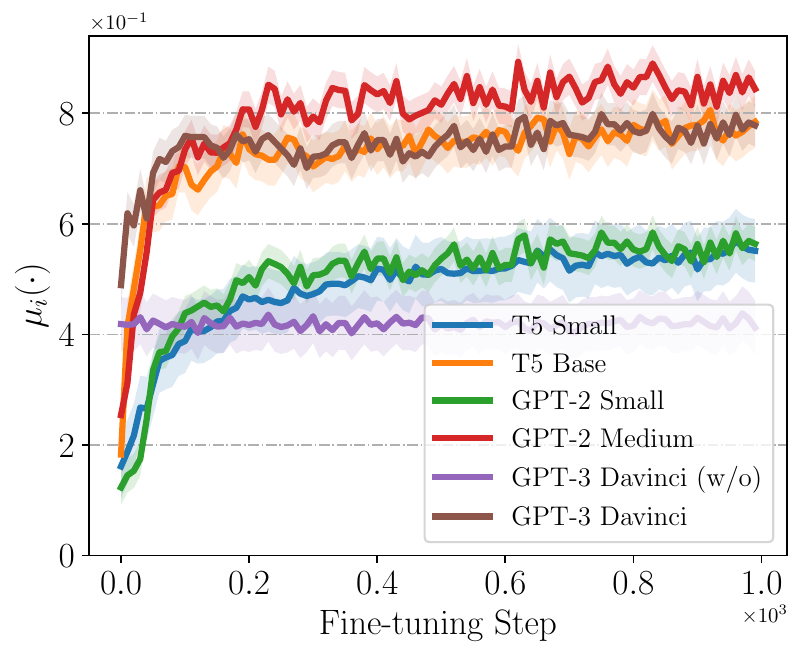}\label{fig:reward_funcs_llm}}
  \subfloat[LLM Bandits: Cumulative Regret]{\includegraphics[width=0.233\textwidth]{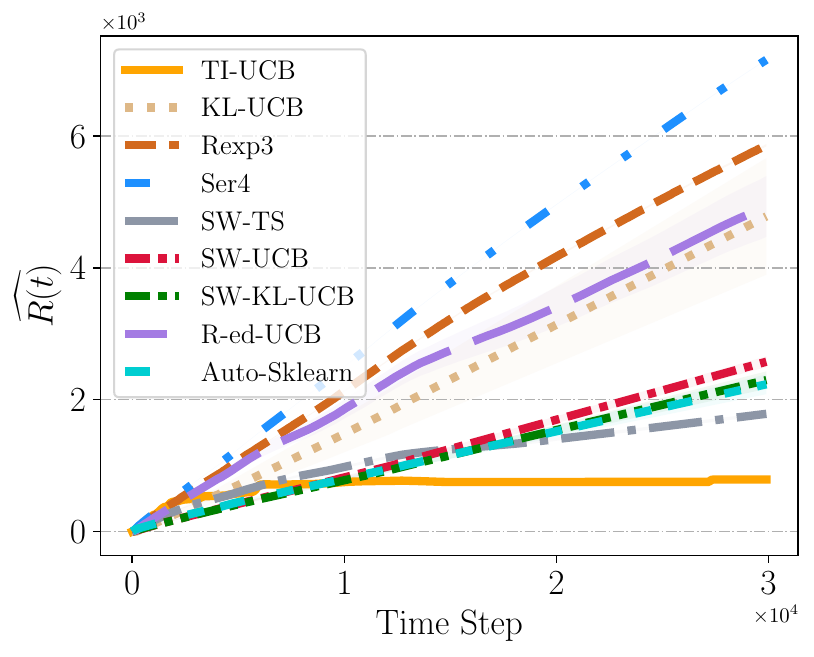}\label{fig:llm_regret}}
  \vspace{-0.5em}
  \caption{Online selection of large language models on XSum datasets for summarization.}
  \vspace{-0.5em}
\end{figure}

\subsection{Large Language Model Selection}\label{sec:LLM}
In this section, we present the experimental results on large language model selection for summarization task. Notably, we introduce the finetuning cost of API-based LLM into the reward design in addition to model performance to explore the economic tradeoffs of LLMs discussed in \cite{howell-etal-2023-distilled}.

\subsubsection{Data Generate}
We choose text summarization as the downstream task with XSum \cite{narayan-etal-2018-dont} dataset. The XSum datasets contains 204,045 samples of text-summary pairs. Each arm corresponds to an API-based LLM or a local small LLM selected as follows:
\begin{itemize} 
\item \textbf{T5 Small} \cite{raffel2020exploring}: a small version of the text-to-text transfer transformer model with 60 million parameters.
\item \textbf{T5 Base} \cite{raffel2020exploring}: a base-sized version of the text-to-text transfer transformer model with 220 million parameters.
\item \textbf{GPT-2 Small} \cite{radford2019language}: a small version of the GPT-2 model with 117 million parameters. 
\item \textbf{GPT-2 Medium} \cite{radford2019language}: a medium-sized version of the GPT-2 model with 355 million parameters.
\item \textbf{GPT-3 Davinci} \cite{brown2020language}: an API-based large language model hosted by OpenAI with 175 billion parameters.
\end{itemize}
Similarly to Figure \ref{fig:intro}, in each round, a random batch of documents is selected from the dataset. Then, the agent chooses an LLM to summarize the documents. The quality of generated summaries are then evaluated using ROUGE-2 \cite{lin2004rouge} score by comparing them with the reference summaries. After each arm pull, the chosen LLM is finetuned with the batch of samples.
For local small LLMs, We set the batch size to be 16. AdamW optimizer is used for fine-tuning and the learning rate is set to be $5e^{-5}$. The fine-tuning processes run on four NVIDIA RTX2080Ti GPUs.
For API-based LLMs, we use the API finetuning hosted by OpenAI. In addition, we include a zero-shot version of GPT-3 without the finetuning step as another base model to be compared, i.e. \textbf{GPT-3-davinci (w/o)}.

Instead of directly using ROUGE-2 as the reward, we introduce the finetuning cost of API-based model.
Specifically, if the chosen is GPT-3 Davinci, the reward is constructed as
$$X_t = \text{ROUGE-2} - \eta_t\;,$$
where $X_t$ is the reward and $\eta_t$ represents the monetary cost of finetuning, which is calculated as the cumulative sum $\eta_t = \eta_{t-1} + m \cdot \mathbf{1}\left[\text{Do Finetuning}\right]$ with $\eta_0 = 0$ where [Do Finetuning] is an indicator of whether we do finetuning at this step. For API-based LLM, we set $m = 0.01$ and for small local LLM, we set $m = 0.0001$. 
The values of finetuning cost $m$ selected are approximate values calculated based on API-finetuning rate per token charged by OpenAI and typical tokens length of 500 per document for text summarization task following \cite{howell-etal-2023-distilled}. To avoid excessive cumulation of finetuning costs, we stop the iterative finetuning after 100 consecutive finetuning steps without performance improvement over 0.1.
The resulting normalized reward trends of considered LLMs are shown in Figure~\ref{fig:reward_funcs_llm}, again showcasing the increasing-then-converging trends with some extent of fluctuations. 

Note that we are fully aware that our reward design might not be the actual implementation as there is also cost of inference and different organizations may have different priorities in employing LLMs. Our primary focus in this paper is to propose a method that can promisingly handles the economic tradeoffs of LLM deployment, where more fine-grained reward design can be easily plugged in.

\begin{figure}[t!]
  \centering
  \subfloat[IMDB Bandits]{\includegraphics[width=0.235\textwidth]{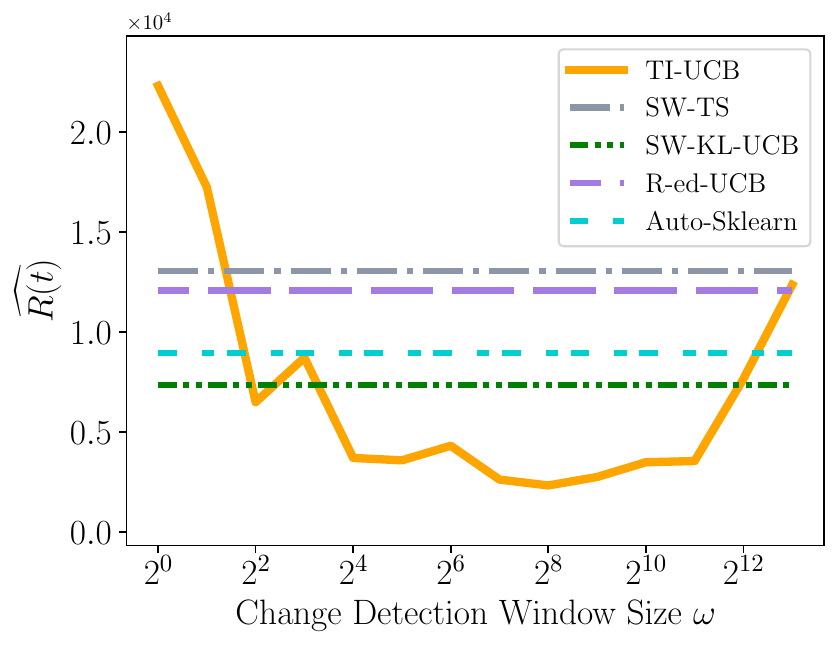}\label{fig:omega_imdb}}
  \subfloat[LLM Bandits]{\includegraphics[width=0.228\textwidth]{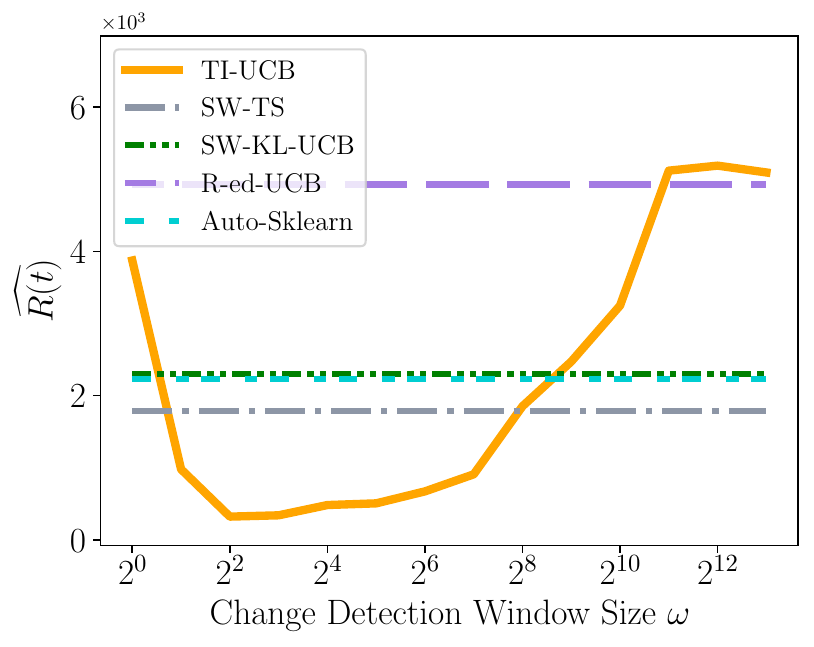}\label{fig:omega_llm}}
  \vspace{-0.8em}
  \caption{Cumulative regret of {\myalg} with different change detection window sizes on two real-world environments.}
  \vspace{-1em}
  \label{fig:omega}
\end{figure}

\subsubsection{Results}
The results are shown in Figure \ref{fig:llm_regret}, which plots the empirical cumulative regrets over 30,000 iterations.

\paragraph{Answer to \textbf{RQ1.}}
From the results, we again observe the advantage of considering the increasing-then-converging trend with {\myalg} achieving the lowest cumulative regret.
We also observe that sliding-window-based algorithms, e.g., SW-TS, SW-UCB, and SW-KL-UCB, outperform most of other baselines, which shows their generalizability in various non-stationary bandit problems.

\paragraph{Answer to \textbf{RQ2.}}
As illustrated in Figure \ref{fig:reward_funcs_llm}, the API-based GPT-3 Davinci achieves high performances at the initial stage even without finetuning while other small LLMs all perform poorly. However, after some finetuning steps, the model performances of T5 Base and GPT-2 Medium increases rapidly and are comparable with the performance of GPT-3 Davinci. Meanwhile, the reward of GPT-3 Davinci first increases and then quickly converges, which represents that the performance improvement brought by finetuning may not surpass the monetary cost induced by finetuning. Thus the economics tradeoffs make the further finetuning of API-based LLM a sub-optimal option compared to finetuning small LLMs. From Figure \ref{fig:llm_regret}, we can observe that our {\myalg} algorithm captures the increasing-then-converging rewards trend of GPT-3 Davinci due to the economic tradeoffs and predicts the potential reward increase of GPT-2 Medium. As a result, instead of being trapped in the initial high reward of API-based LLM, {\myalg} effectively and efficiently explores the best-performing LLM and makes the optimal model selection with a small amount of reward observations.


\subsection{Different Change Detection Window Size}

As shown in Section \ref{sec:classification} and \ref{sec:LLM}, {\myalg} has demonstrated its effectiveness when encountering increasing-then-converging reward trends with fluctuations in real-world environments. In this seciton, we further analyze the effectiveness of the change detection mechanism in the two real world environments constructed above.
Specifically, we vary the change detection window size $\omega$ from $2^0$ to $2^{13}$ and evaluate the performances {\myalg} in both classification model selection and LLM selection environments. The results are shown in Figure \ref{fig:omega}, where only four competitive baselines are selected and shown based on the performances from previous experiments due to limited space.

\paragraph{Answer to \textbf{RQ3.}} From Figure \ref{fig:omega}, we observe that {\myalg} outperforms compared baselines with the change detection window size $\omega$ in a certain range. Specifically, {\myalg} achieves the lowest regret with $\omega \in [2^4, 2^{11}]$ on IMDB bandits and with $\omega \in [2^1, 2^{7}]$ on LLM bandits. Though the performance of {\myalg} degenerates when $\omega$ is very small or very large, the wide ranges where {\myalg} performs best still demonstrate its robustness in terms of hyperparameter sensitivity and readiness for practical applications.

Analyzing the cases when $\omega$ is small, we attribute the performance degeneration to the fluctuations in the increasing-then-converging reward trend as observe in Figure \ref{fig:reward_funcs_imdb} and \ref{fig:reward_funcs_llm}. 
As rewards typically do not change in a smooth or monotonic manner and may drastically go up and down in a short period, a small size of change detection window such as $2^0$ could easily result in false detection of change points. Moreover, comparing Figure \ref{fig:omega_imdb} and Figure \ref{fig:omega_llm}, we can further observe a right skew of {\myalg}'s regrets on IMDB bandits, which also conforms the observation from Figure \ref{fig:reward_funcs_imdb} and \ref{fig:reward_funcs_llm} that IMDB bandits show more fluctuations and thus the window size needs to be larger to mitigate them. Such implication further suggests the selection of $\omega$ may be subject to the consideration of handling potential fluctuations in practice.

For large values of $\omega$ such as $2^{13}$, the reason of performance degeneration is due to the reduced frequency and latency of change detection. As $\omega$ approaches the evaluation horizon $T$, it is even possible that there are not sufficient amount of reward observations for change detection, which on the other hand, demonstrates the effectiveness and necessity of our change detection mechanism.








\section{Conclusion}\label{sec:conclusion}
In this work, we have explored the pressing issue of online model selection, notably within the context of the rapidly evolving field of LLMs. By capitalizing on the increasing-then-converging pattern in model performance being trained or finetuned, we propose the {\myalg} algorithm, which can promising predict the reward increases and detect converging points with a sliding-window change detection mechanism.
We theoretically prove a logarithmic regret upper bound of {\myalg} in a typical increasing bandit setting implying a fast convergence rate. Extensive experiments also demonstrate empirically the advantage of {\myalg} in online model selection for classification models and recent LLMs.
Our work underscores the necessity of considering the increasing-then-converging reward trend in online model selection, which paves the road for more efficient and economic model selection in the deployment of LLMs.

\section*{Aknowledgement}
The corresponding author Shuai Li is supported by National Key Research and Development Program of China (2022ZD0114804) and National Natural Science Foundation of China (62376154).
\newpage

\bibliographystyle{ACM-Reference-Format}
\balance
\bibliography{ref}


\section*{Appendix}
In Section \ref{appen:proofs}, we analyze TI-UCB's concentration level of upper confidence bound and change detection. In Section \ref{proof:regret}, we prove its regret upper bound. Baseline parameters are provided in Section \ref{sec:app:para}.

\appendix

\section{\textbf{Theoretical Analysis}}
\label{appen:proofs}

\subsection{\textbf{Proof for Concentration Inequality}}
\label{proof:concen}

For each arm $i$, its reward satisfies a linear increase with the number of times it is pulled and then tends to a stationary state. 

\paragraph{Proof of Proposition \ref{sec:concen:thm}.} 
Recall that $\mu_{i, 1}, \mu_{i,2}, \cdots, \mu_{i, n} \,, (n_i(t) = n)$ are $n$ true reward values of arm $i$ during $[0, t]$, and $\hat{a}_i$, $\hat{b}_i$ are the least square estimations. For any $\varepsilon > 0$, the concentration inequality of the reward is
$\PP{\hat{\mu}_{i,n} - \mu_{i,n} \geq \varepsilon} \leq \exp\bracket{ -{n\varepsilon^2}/{128} }$. 
Choose $\delta = \exp\bracket{-\frac{n\varepsilon^2}{512}}\in (0,1)$, then \begin{equation}
    \begin{aligned}
    \label{eq: reward_concen}
        \PP{\hat{\mu}_{i,n} - \mu_{i,n} \geq 16\sqrt{{2 \ln(1/\delta)}/{n}}} \leq \delta\,.
    \end{aligned}
\end{equation}
To validate equation \eqref{eq: reward_concen}, we first explain the estimation of unknown  $\hat{a}_i, \hat{b}_i$. 
Denote $X_{i, s}\in(0,1)$ as the observed reward of arm $i$ when arm $i$ has been pulled $s$ times ($s$ = 1,2, $\cdots$, n) with true value $\mu_{i,s}$. 
\begin{equation*}
\label{eq.3}
\begin{aligned}
\hat{a}_i - a_i 
&=\sum_{s=1}^{n_i(t)} \frac{(s-\bar{s})\cdot (X_{i,s} - \mu_{i,s}) }{\sum_{s=1}^{n_i(t)} (s-\bar{s})^2} =\frac{1}{n} \sum_{s=1}^{n} \frac{12(s-\bar{s})\left(X_{i,s} - \mu_{i,s}\right)}{n^2-1}.
\end{aligned}
\end{equation*}
\begin{equation*}
\label{eq.6}
\begin{aligned}
\hat{b}_i - b_i &=\sum_{s=1}^{n_i(t)} \bracket{\frac{1}{n_i(t)} - \frac{(s-\bar{s})\bar{s}}{\sum_{s=1}^{T_i(t)} (s-\bar{s})^2}} [X_{i,s} - \mu_{i,s}] \,.
\end{aligned}
\end{equation*}

Note that the estimated reward in the linear growth stage $\hat{\mu}_{i}(t+1)= \hat{a}_i \cdot(n_i(t)+1) + \hat{b}_i
=\frac{1}{n}\sum\limits_{s=1}^{n} \left[1+\frac{6}{n-1}\bracket{s-\bar{s}}\right]X_{i,s}$, and $|\hat{\mu}_{i}(t+1) - \mu_{i}(t+1)| \leq \frac{1}{n}\abs{\sum\limits_{s=1}^{n} \left[ 1+\frac{6}{n-1}\bracket{s-\frac{n+1}{2}} \right] \left(X_{i,s} - \mu_{i, s}\right)}\,.$
Since the mean in the sum term is $0$ and its absolute value has a constant upper bound $8$, then by Corollary 5.5 in \cite{lattimore2020bandit}, 
\begin{equation*}
\begin{aligned}
\label{eq: reward_delta}
\PP{\hat{\mu}_{i}(t+1) - \mu_{i} (t+1)\geq\varepsilon} \leq \exp(-{n\varepsilon^2}/{128}) \,.
\end{aligned}
\end{equation*}

\subsection{Proof for Change Detection}
\label{proof:CD}
Each detection is to compare reward predictions of consecutive windows at time $t+1$ from $w_1 = \left[n_i(t)-2\omega + 1, n_i(t)-\omega\right]$ and $w_2 = \left[n_i(t)-\omega + 1, n_i(t)\right]$, denoted as $\hat{\mu}_{w_1, A_t}(t+1)$ and $\hat{\mu}_{w_2, A_t}(t+1)$ respectively.
As stated in Proposition \ref{detect standard}, if the difference of two predictions exceeds the threshold ${\gamma}/{2}$, we consider a change point is detected and reward observations of arm $A_t$ are reinitialized. Otherwise, the algorithm continues to pull arms minimizing regrets. 
We analyze the rationality of the change detection criteria as below.

\subsubsection{Change does not happen}
\label{proof:change_not_happens}

For any given $\delta$ and for any $0< \varepsilon_1 \leq {\gamma}/{2}$, if $\PP{|\hat{\mu}_{w_1, i}(t+1) - \hat{\mu}_{w_2, i}(t+1)| \geq \varepsilon_1 } \leq \delta$, then\\ $\PP{|\hat{\mu}_{w_1, i}(t+1) - \hat{\mu}_{w_2, i}(t+1)| \geq \frac{\gamma}{2}}  \leq \delta$ holds. To find proper $\varepsilon_1$, the least square method is used for reward estimation at time $t + 1$.

\textbf{{\boldmath$w_1$} stage.}
Denote $a_{w_1, i}$ and $b_{w_1, i}$ as the linear growth parameter and the interpret term of $w_1$ stage for arm $i$, respectively.
\begin{align*}
    &\hat{a}_{w_1, i} = \sum_{s=n_i(t)-2\omega+1}^{n_i(t)-\omega} \frac{s-\bar{s}}{\sum_{s=n_i(t)-2\omega+1}^{n_i(t)-\omega} (s-\bar{s})^2} X_s
\end{align*}
\begin{align*}
    &\hat{b}_{w_1, i} = \sum_{s=n_i(t)-2\omega+1}^{n_i(t)-\omega} \left[\frac{1}{\omega} - \frac{(s-\bar{s})\bar{s}}{\sum_{s=n_i(t)-2\omega+1}^{n_i(t)-\omega} (s-\bar{s})^2}\right] X_s\,.
\end{align*}
Since $\bar{s} = \frac{2n_i(t)-3\omega+1}{2}$, $\sum\limits_{s=n_i(t)-2\omega+1}^{n_i(t)-\omega}(s-\bar{s})^2 = \frac{\omega( \omega^2 - 1)}{12}$, we have
\begin{equation*}
\begin{aligned}
\hat{\mu}_{w_1, i}(t\!+\!1) \!=\! \hat{a}_{w_1, i} \!\cdot\!(n_i(t)\!+\!1)\! + \!\hat{b}_{w_1, i}\! =\! \sum_{s\in w_1}\left[\frac{6(s\!-\!\bar{s})(1\!+\!3\omega)}{\omega(\omega^2\!-\!1)} \!+\!\frac{1}{\omega}\right]\!X_s \,.
\end{aligned}
\end{equation*}

\textbf{{\boldmath$w_2$} stage.}
Similarly, $a_{w_2, i}$ and $b_{w_2, i}$ mean the linear growth term and the interpret term of $w_2$ stage, respectively. Then
\begin{equation*}
\begin{aligned}
\hat{\mu}_{w_2, i}(t\!+\!1) &\!=\! \hat{a}_{w_2, i}\!\cdot\! (n_i(t)\!+\!1) \!+\! \hat{b}_{w_2, i}\!= \!\sum_{s=n_i(t)-\omega+1}^{n_i(t)}\!\left[\frac{6(s\!-\!\bar{s})}{\omega(\omega\!+\!1)} \!+\!\frac{1}{\omega}\right]\!X_s\,.
\end{aligned}
\end{equation*}
Then we simplify $\hat{\mu}_{w_1, i}(t+1) - \hat{\mu}_{w_2, i}(t+1)$ and obtain 
\begin{align*}
    \frac{1}{\omega} \sum_{s=n_i(t)-\omega+1}^{n_i(t)} \Bigg\{ \left[\frac{6(s\!-\!\bar{s})(1\!+\!3\omega)}{\omega^2\!-\!1} + 1\right] X_{s-\omega}  - \left[\frac{6(s\!-\!\bar{s})}{\omega\!+\!1}+1 \right]X_s \Bigg\}\,.
\end{align*}
Since $\hat{\mu}_{w_2, i}(t+1) - \mu_{w_2, i}(t+1)$ is sub-gaussian with zero mean and $\mu_{w_1, i}(t+1) = \mu_{w_2, i}(t+1)$ when change has not happened, we have
\begin{align*}
    &\hat{\mu}_{w_1, i}(t+1) - \hat{\mu}_{w_2, i}(t+1) =\left[\hat{\mu}_{w_1, i}(t+1) - \mu_{w_1, i}(t+1)\right]\\
    &+\left[\mu_{w_1, i}(t+1) - \mu_{w_2, i}(t+1)\right]
    +\left[\mu_{w_2, i}(t+1) - \hat{\mu}_{w_2, i}(t+1)\right]\,.
\end{align*}
By the similar analysis in \ref{proof:concen}, the following holds.
\begin{equation}
\label{eq.12}
\begin{aligned}
\PP{| \hat{\mu}_{w_1, i}(t+1) - \hat{\mu}_{w_2, i}(t+1) |\geq\varepsilon_1} \leq  2\cdot \exp\left(-\frac{\omega\varepsilon_1^2}{2(14+\frac{12}{|\omega-1|})^2}\right)\,. 
\end{aligned}
\end{equation}
Choose $\delta_0\! =\! 2 \cdot \exp\left(-\frac{\omega\varepsilon_1^2}{2\left(14+\frac{12}{|\omega\!-\!1|} \right)^2}\right)$, and $\varepsilon_1 \!=\! \sqrt{\frac{2}{\omega} \!\left(14\!+\!\frac{12}{|\omega \!-\!1|}\right)^2 \!\ln(\frac{2}{\delta_0})}$\footnote{Note that the upper bound in equation \ref{eq.12} is not the minimum upper bound and thus $\varepsilon_1$ choice is not unique.}. Therefore, for any given $\delta_0$, if $\gamma$ satisfies $\frac{\gamma}{2} \geq \varepsilon_1$, then
\begin{align}
\label{change_not_happen}
    \PP{\left| \hat{\mu}_{w_1, i}(t+1) - \hat{\mu}_{w_2, i}(t+1) \right| \geq\frac{\gamma}{2}} \leq \delta_0 \,.
\end{align}

\subsubsection{Change happens}
When change happens, the aim is to find an $\varepsilon_2 $, such that for any $\frac{\gamma}{2} \leq \varepsilon_2$,
\begin{align}
\label{change_happens}
    \PP{| \hat{\mu}_{w_1, i}(t+1) - \hat{\mu}_{w_2, i}(t+1) | \geq\frac{\gamma}{2}} \geq 1-\delta_0 \,.
\end{align}
With $\varepsilon_1\leq \frac{\gamma}{2} \leq \varepsilon_2$ holds, we simply choose $\gamma$ as
\begin{align}
\label{detect_select}
    \varepsilon_1 = \varepsilon_2 = \frac{\gamma}{2}\,,
\end{align}
which can identify whether the reward has reached a stable state.

Different from Section \ref{proof:change_not_happens}, $\mu_{w_1, i}(t+1) = \mu_{w_2, i}(t+1)$ does not hold when change happens. Assume the growth of reward is linear, which means that $|\mu_{w_1, i}(t+1) - \mu_{w_2, i}(t+1)|$ is approximately equal to $a_i \cdot \omega$. Then when change happens,
\begin{equation*}
    \begin{aligned}
    &\big|\hat{\mu}_{w_1, i}(t+1) - \hat{\mu}_{w_2, i}(t+1)\big| \\
    >\,& |\mu_{w_1, i}(t+1) - \mu_{w_2, i}(t+1)| - |\hat{\mu}_{w_1, i}(t+1) -\mu_{w_1, i}(t+1) | \\
    &- |\hat{\mu}_{w_2, i}(t+1) - \mu_{w_2, i}(t+1)| \\
    =\,& a\cdot\omega - |\hat{\mu}_{w_1, i}(t+1) -\mu_{w_1, i}(t+1) | - | \hat{\mu}_{w_2, i}(t+1) - \mu_{w_2, i}(t+1)|\\
    >\,& \gamma - | \hat{\mu}_{w_1, i}(t+1) -\mu_{w_1, i}(t+1) | - | \hat{\mu}_{w_2, i}(t+1) - \mu_{w_2, i}(t+1)| \,.
    \end{aligned}
\end{equation*}
Then the followings hold.
\begin{align*}
&|\hat{\mu}_{w_1, i}(t+1) - \hat{\mu}_{w_2, i}(t+1)|_{\text{when change happens}}\\
\geq\,& a\cdot\omega - |\hat{\mu}_{w_1, i}(t+1) -\mu_{w_1, i}(t+1) | - | \hat{\mu}_{w_2, i}(t+1) - \mu_{w_2, i}(t+1)|\\
>\,& \gamma - \frac{\gamma}{4} - \frac{\gamma}{4} = \gamma/2 = \gamma/4 + \gamma/4\\
\geq\,& |\hat{\mu}_{w_1, i}(t+1) - \hat{\mu}_{w_2, i}(t+1)|_{\text{when change has not happened}}
\end{align*}
The last $''\geq''$ above is derived from Lemma \ref{lem: rationality}.
\begin{lemma}
\label{lem: rationality}
For change detections, $\left|\hat{\mu}_{w_1, i}(t+1) -\mu_{w_1, i}(t+1)\right| < \frac{\gamma}{4}$ and $\left|\hat{\mu}_{w_2, i}(t+1) -\mu_{w_2, i}(t+1)\right| < \frac{\gamma}{4}$ hold when $a\cdot\omega>\frac{\gamma}{2}$.
\end{lemma}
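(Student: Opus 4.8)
The plan is to read Lemma \ref{lem: rationality} as a high-probability concentration statement for each \emph{single}-window prediction error, reusing the machinery that already produced the combined-difference bound in equation \eqref{eq.12}. First I would recall, exactly as in the proof of Proposition \ref{sec:concen:thm}, that the window prediction is an explicit linear functional of the observed rewards: $\hat{\mu}_{w_1, i}(t+1) = \sum_{s\in w_1}\bracket{\frac{6(s-\bar{s})(1+3\omega)}{\omega(\omega^2-1)} + \frac{1}{\omega}} X_s$, and likewise for $w_2$ with weights $\frac{6(s-\bar{s})}{\omega(\omega+1)} + \frac{1}{\omega}$. Consequently each error $\hat{\mu}_{w_j, i}(t+1) - \mu_{w_j, i}(t+1)$ is a weighted sum $\sum_{s\in w_j} c_s\bracket{X_s - \mu_{i,s}}$ of zero-mean, bounded (hence sub-Gaussian) noise terms, and the only task is to control its fluctuation by $\gamma/4$.

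Next I would bound the effective sub-Gaussian parameter of each single window. Since $w_1$ and $w_2$ are disjoint, their noise contributions are independent, so the variance proxy of the difference $\hat{\mu}_{w_1}-\hat{\mu}_{w_2}$ that drives equation \eqref{eq.12} splits additively across the two windows. Using $\sum_{s}(s-\bar{s})^2 = \frac{\omega(\omega^2-1)}{12}$, I would bound the per-window coefficient magnitude and show that each single-window constant is at most (about) half of the combined constant $14+\frac{12}{|\omega-1|}$ appearing in \eqref{eq.12}. Applying the same Corollary~5.5 of \cite{lattimore2020bandit} tail bound then yields, for each $j\in\{1,2\}$, an estimate of the form $\PP{\abs{\hat{\mu}_{w_j, i}(t+1) - \mu_{w_j, i}(t+1)} \geq \varepsilon} \leq 2\exp\bracket{-\frac{\omega\varepsilon^2}{\left(14+\frac{12}{|\omega-1|}\right)^2}}$.

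I would then substitute $\varepsilon = \gamma/4$ using the calibration $\gamma/2 = \varepsilon_1 = \sqrt{\frac{2}{\omega}\left(14+\frac{12}{|\omega-1|}\right)^2\ln\bracket{\frac{2}{\delta}}}$ fixed in Proposition \ref{detect standard}, so that $\gamma/4 = \frac12\varepsilon_1$. Plugging in collapses the exponent to a multiple of $\ln\bracket{\frac{2}{\delta}}$ and gives $\PP{\abs{\hat{\mu}_{w_j, i}(t+1) - \mu_{w_j, i}(t+1)} \geq \gamma/4} \leq \delta$, i.e. each single-window error stays strictly below $\gamma/4$ with probability at least $1-\delta$. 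The factor-of-two gap between the single-window and combined-difference variance proxies is precisely what turns the $\gamma/2$ concentration level of Proposition \ref{detect standard} into the $\gamma/4$ single-window bound, and the hypothesis $a\cdot\omega>\frac{\gamma}{2}$ guarantees that this same $\gamma$ can serve simultaneously as the detection threshold (detectability) and as four times the fluctuation bound (concentration).

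The main obstacle I anticipate is the coefficient-splitting step: because predicting at $t+1$ from $w_1$ requires extrapolating roughly $\omega$ steps past the window (whereas $w_2$ extrapolates only one step), the $w_1$ weights carry the amplifying factor $1+3\omega$ and contribute a \textbf{strictly larger} $\sum_{s}c_s^2$ than $w_2$, so the two windows are \emph{not} symmetric and the $w_1$ constant is not exactly half of $14+\frac{12}{|\omega-1|}$. I would therefore bound $\sum_{s\in w_1}c_s^2$ and $\sum_{s\in w_2}c_s^2$ separately and verify that even the larger $w_1$ proxy is controlled tightly enough by $\left(14+\frac{12}{|\omega-1|}\right)^2/\omega$ for the $\gamma/4$ threshold to survive; this asymmetric accounting, rather than any new probabilistic idea, is where the real work lies.
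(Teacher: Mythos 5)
Your proposal is correct in substance and shares the paper's overall skeleton: concentrate each single-window prediction error at level $\gamma/4$, using the calibration $\gamma/2=\varepsilon_1$ inherited from the combined-difference bound \eqref{eq.12}, and let the hypothesis $a\cdot\omega>\gamma/2$ supply detectability. Where you genuinely differ is in how the factor of two between $\gamma/2$ and $\gamma/4$ is justified. The paper recycles its one-step-ahead prediction bound $\exp(-\omega\varepsilon^2/128)$ (equation \ref{eq: reward_delta}) to get $\varepsilon_{pred}=\sqrt{128\ln(1/\delta)/\omega}$ per window, and then asserts that ``we can always choose $\omega$ to make $\varepsilon_{CD}$ at least twice as large as $\varepsilon_{pred}$''; since $\varepsilon_{CD}/\varepsilon_{pred}=\bracket{14+\tfrac{12}{|\omega-1|}}/8$, this is in fact a hidden constraint on the window size ($|\omega-1|\leq 6$), and it also silently applies a bound derived for one-step extrapolation to the $w_1$ prediction, whose weights carry exactly the $(1+3\omega)$ amplification you flag. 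You instead recompute the per-window constants from the true extrapolation weights and extract the factor of two from additivity of variance proxies over the two disjoint windows --- more honest about the asymmetry, and free of the restriction on $\omega$. Two caveats, neither fatal at the paper's own level of rigor: the split is nowhere near ``half each'' (with sum-of-squares accounting $w_1$ contributes roughly $28/\omega$ against $4/\omega$ for $w_2$), but the inequality you actually need --- each per-window proxy at most $\bracket{14+\tfrac{12}{|\omega-1|}}^2/(2\omega)\approx 98/\omega$ --- does survive, precisely because the paper's combined constant is itself inflated (it is built from summed maximal coefficients, $(10+4)^2$, rather than summed squares); and with your calibration the final failure probability evaluates to $2\exp\bracket{-\tfrac12\ln(2/\delta)}=\sqrt{2\delta}$ rather than $\delta$, a quantitative slip that weakens but does not break the high-probability conclusion.
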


\begin{proof}
By equation \ref{eq: reward_delta}, for any $\varepsilon_{pred}$ and a given $\delta > 0$, if
\begin{equation*}
\begin{aligned}
\PP{\hat{\mu}_{w_1, i}(t+1) - \mu_{w_1, i} (t+1)\geq\varepsilon_{pred}} \leq \exp\bracket{-\frac{\omega\varepsilon_{pred}^2}{128}} = \delta \,,
\end{aligned}
\end{equation*}
then $\varepsilon_{pred} = \sqrt{\frac{128}{\omega} \ln(1/\delta)}$. This implies that $\hat{\mu}_{w_1, i}(t+1) - \mu_{w_1, i} (t+1) < \sqrt{\frac{128}{\omega} \ln(1/\delta)} $ a.s. for sufficiently small $\delta$. The conclusion also holds for $w_2$. By equation \eqref{eq.12}, for $\forall\varepsilon_{CD} > 0$ and a given $\delta$ 
\begin{align*}
    \PP{\hat{\mu}_{w_1, i}(t\!+\!1) - \hat{\mu}_{w_2, i}(t\!+\!1) \geq\varepsilon_{CD}} \leq \exp\left(-\frac{\omega\varepsilon_{CD}^2}{2(14\!+\!\frac{12}{|\omega-1|})^2}\right) = \delta ,
\end{align*}
then $\varepsilon_{CD} = \sqrt{\frac{2(14+\frac{12}{|\omega -1|})^2}{\omega} \ln(1/\delta)}$, and $\hat{\mu}_{w_1, i}(t+1) - \hat{\mu}_{w_2, i}(t+1) < \sqrt{ \frac{2\left(14+\frac{12}{|\omega-1|}\right)^2}{\omega} \ln(1/\delta)}$ a.s. We can always choose $\omega$ to make the $\varepsilon_{CD}$ at least twice as large as $\varepsilon_{LSE}$. Therefore, $\left|\hat{\mu}_{w_1, i}(t\!+\!1) \!-\!\mu_{w_1, i}(t\!+\!1)\right| < \frac{\gamma}{4}$, $\left|\hat{\mu}_{w_2, i}(t+1) -\mu_{w_2, i}(t+1)\right| < \frac{\gamma}{4}$ can always hold when $a\cdot\omega>\frac{\gamma}{2}$ and change has not happened.
\end{proof}

With Lemma \ref{lem: rationality}, when change has not happened, we have
\begin{equation*}
    \begin{aligned}
        &|\hat{\mu}_{w_1, i}(t+1) - \hat{\mu}_{w_2, i}(t+1)| \\
        =\,& |\hat{\mu}_{w_1, i}(t+1) - \mu_{w_1, i}(t+1) + \mu_{w_2, i}(t+1) - \hat{\mu}_{w_2, i}(t+1)|\\
        \leq\,& |\hat{\mu}_{w_1, i}(t+1) -\mu_{w_1, i}(t+1) | + | \hat{\mu}_{w_2, i}(t+1) - \mu_{w_2, i}(t+1)| \\
        \leq\,& \gamma/4 + \gamma/4 = \gamma/2\,,
    \end{aligned}
\end{equation*}
which provides the rationality of 
Proposition \ref{detect standard}.

\section{Proof for Regret Bound}
\label{proof:regret}
Recall that $F_i = \{\tau_i' > \nu_i\}$ and $ D_i = \{\tau_i' \leq \nu_i+\omega\}$ indicate that the algorithm detects changes before the actual change occurs, and it does not detect changes within the window length after the actual change, respectively. $\nu_i$ denotes the moment that change happens and $\tau_i'$ denotes the moment that our algorithm detects the change. 
When $F_1^{c}$ occurs, its probability is bounded by $\PP{F_1^{c}} = \PP{\tau_1' < \nu_1} = \PP{|\hat{\mu}_1(t+1) - \hat{\mu}_2(t+1)| > \frac{\gamma}{2}} \leq \delta_0$.

We divide [0,$T$] into [0,$\nu_1$] and [$\nu_1$,$T$]. In the [0,$\nu_1$] stage, there is no arm reaching the threshold and this stage is a generalization of the standard UCB process except the distribution of the reward. The expected regret can be decomposed by:
\begin{equation}
    \begin{aligned}
    \label{eq:decomposition}
         \EE{R(T)} &= \EE{R(T)\bOne{F_1}} + \EE{R(T) \bOne{F_1^{c}} }\\
    &\leq \EE{R(\nu_1)\bOne{F_1}} + \EE{R(T) - R(\nu_1)} + T\cdot \PP{F_1^{c}}\,.
    \end{aligned}
\end{equation}

\begin{lemma} 
\label{sec:regret:number}
Before $\nu_1$, the times that arm $i$ has been pulled is at most ${2048 \ln(1/\delta)}/{\Delta_i^2 (t)}$ in TI-UCB algorithm, where $\Delta_i(t)$ is the difference between the optimal reward and reward of arm $i$ at time $t$.
\end{lemma}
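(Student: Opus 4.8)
The plan is to bound the number of pulls of a suboptimal arm $i$ during the initial linear-growth stage $[0,\nu_1]$ by adapting the standard UCB argument, with the key modification being that the concentration result is now stated for the least-squares-based estimator $\hat{\mu}_{i,n_i(t)}$ rather than a simple sample average. I would start by recalling that the algorithm pulls $A_t = \argmax_i \bar{\mu}_{i,n_i(t)}$, so whenever a suboptimal arm $i$ is selected at time $t$ (against the optimal arm $i_t^\ast$), the UCB index of $i$ must have exceeded that of $i_t^\ast$, i.e. $\bar{\mu}_{i,n_i(t)} \geq \bar{\mu}_{i_t^\ast, n_{i_t^\ast}(t)}$.

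Next I would invoke the concentration inequality from Proposition~\ref{sec:concen:thm}: with probability at least $1-\delta$, the true mean is dominated by its upper confidence value, so simultaneously $\mu_{i_t^\ast}(t) \leq \bar{\mu}_{i_t^\ast, n_{i_t^\ast}(t)}$ (the optimal arm's UCB does not underestimate) and $\hat{\mu}_{i,n_i(t)} \leq \mu_i(t) + 16\sqrt{2\ln(1/\delta)/n_i(t)}$ (the suboptimal arm's estimate does not overestimate too much). Chaining these through the selection inequality gives a bound of the form $\mu_{i_t^\ast}(t) \leq \mu_i(t) + 32\sqrt{2\ln(1/\delta)/n_i(t)}$, i.e. $\Delta_i(t) = \mu_{i_t^\ast}(t) - \mu_i(t) \leq 32\sqrt{2\ln(1/\delta)/n_i(t)}$. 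Solving this for $n_i(t)$ yields
\begin{equation*}
    n_i(t) \leq \frac{2048 \ln(1/\delta)}{\Delta_i^2(t)}\,,
\end{equation*}
matching the claimed bound (note $32^2 \cdot 2 = 2048$). This is the heart of the argument: the constant $16$ in the confidence width, doubled to account for both arms, squares to give the factor $2048$.

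The main subtlety — and the step I expect to require the most care — is the time-dependence of the gap $\Delta_i(t)$ in the increasing-reward setting. In the classical stationary analysis, $\Delta_i$ is a fixed constant, but here both $\mu_{i_t^\ast}(t)$ and $\mu_i(t)$ vary with the number of pulls, so one must argue that the inequality $\Delta_i(t) \leq 32\sqrt{2\ln(1/\delta)/n_i(t)}$ holds at the specific time $t$ of the pull and that it correctly translates into a bound on the pull count $n_i(t)$ at that moment. I would handle this by working with the instantaneous gap $\Delta_i(t)$ evaluated at each selection time, relying on the assumption (from the regret setting) that rewards grow linearly so the estimator's bias is controlled over the window of observations used by the least-squares fit. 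The cleanest route is to show that if $n_i(t)$ already exceeds $2048\ln(1/\delta)/\Delta_i^2(t)$, then the UCB index of arm $i$ is strictly below that of the optimal arm on the high-probability event, contradicting its selection; hence the pull count can never surpass this threshold. The remaining low-probability failure events are absorbed into the additive constant terms in the final regret bound of Theorem~\ref{sec:regret:thm}.
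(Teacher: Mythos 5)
Your proposal is correct and follows essentially the same route as the paper's own proof: the selection event $\bar{\mu}_i(t) > \bar{\mu}_{i^\ast}(t)$ is combined with the two-sided concentration inequality of Proposition~\ref{sec:concen:thm} applied to both arms, chained to yield $\Delta_i(t) \leq 32\sqrt{2\ln(1/\delta)/n_i(t)}$, and solved for $n_i(t) \leq 2048\ln(1/\delta)/\Delta_i^2(t)$ with the same constant accounting ($32^2 \cdot 2 = 2048$). Your explicit discussion of the time-varying gap is, if anything, slightly more careful than the paper's treatment, which handles this point only with a brief remark on the uniqueness of the optimal arm and the constancy of $\Delta_{\min}$.
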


\begin{proof}
For simplicity, we assume that $\bar{\mu}_i(t)$ is the upper confidence value (equation \ref{UCB bound} of $\text{UCB}_i(t-1, \delta)$) of arm $i$ at time $t$ and $\bar{\mu}_{i^\ast}(t)$ is the value of the optimal arm. By the definition of $\text{UCB}_i(t-1, \delta)$, $\bar{\mu}_i(t) = \hat{\mu}_i(t) + 16\sqrt{{2 \ln(1/\delta)}/{n_i(t)}}\,.$
If arm $i$ is selected rather than arm $i^\ast$ at time $t$, it implies that
$\bar{\mu}_i(t) > \bar{\mu}_{i^*}(t)\,.$
Applying concentration inequality $\mu_i(t) - 16\sqrt{{2\ln(1/\delta)}/{n_i(t)}} < \hat{\mu}_i < \mu_i(t) + 16\sqrt{{2\ln(1/\delta)}/{n_i(t)}}$ to arm $i$ and  $i^\ast$, we have
$\hat{\mu}_i <  \mu_i(t) + 16\sqrt{{2\ln(1/\delta)}/{n_i(t)}}\,, \hat{\mu}_{i^*} > \mu_{i^*}(t) - 16\sqrt{{2\ln(1/\delta)}/{n_i(t)}}\,.$
Therefore, if arm $i$ is selected, there must be $n_i(t) < \frac{2048\ln(1/\delta)}{\Delta_i^2(t)}$,
where $\Delta_i(t)$ is the difference between the optimal reward and reward of arm $i$ at time $t$. Then for $\Delta_{\min} = \min\limits_{t\in[0,T],i\neq i^\ast} \{ \mu_{i^\ast}(t) - \mu_i(t) \}$,
$n_i(T) < {2048\ln(1/\delta)}/{\Delta_{\min}^2}\,.$
We assume that the action and the optimal arm are both unique in each round. The $\Delta_{min}$, determined by the optimal policy, remains constant regardless of the order in which the arms are pulled, and is also unique. 
\end{proof}

\begin{lemma} 
\label{sec:regret:UCB}
For $K>0$, $T>0$ and $\delta = O({1}/{T})$, the regret of  TI-UCB in $[0,T]$ before the first threshold is reached is bounded by
\begin{equation}
    \begin{aligned}
        \sum_{i: n_i(T)>n_{i}^{\ast}(T)} c_i \frac{2048 \ln(T)}{\Delta_{min}^2} + K\bracket{\frac{\pi^2}{3} + 1}\,.
    \end{aligned}
\end{equation}
\end{lemma}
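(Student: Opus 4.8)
The plan is to treat this as the standard UCB regret analysis, generalized from stationary to linearly-increasing per-pull rewards, and then read off the two terms of the claimed bound as the \emph{exploration} cost and the \emph{concentration-failure} cost. First I would reduce the regret accumulated on $[0,\nu_1]$ to a weighted count of suboptimal pulls. Since no arm has converged before $\nu_1$ and the per-pull rewards are increasing and bounded by the stable values $c_i$, I expect the regret to be controlled by $\sum_{i:\,n_i(T)>n_i^\ast(T)} c_i\,\mathbb{E}[n_i(T)]$, i.e. each over-explored suboptimal arm $i$ contributes at most its stable reward $c_i$ per excess pull, with the sum ranging over the arms the agent over-plays relative to the optimal sequence.

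Next I would control $\mathbb{E}[n_i(T)]$ by conditioning on the good concentration event. Define the event that, for every arm and every round up to $t$, the empirical mean stays within the confidence radius of Proposition \ref{sec:concen:thm}. On this event, Lemma \ref{sec:regret:number} bounds the number of pulls of any suboptimal arm by $2048\ln(1/\delta)/\Delta_{\min}^2$; choosing the confidence level $\delta = 1/T$, as permitted by the hypothesis $\delta = O(1/T)$, converts $\ln(1/\delta)$ into $\ln T$ and yields the leading term $\sum_{i} c_i\,2048\ln(T)/\Delta_{\min}^2$.

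The remaining additive constant comes from the complement (bad) event. Here I would apply the concentration bound of Proposition \ref{sec:concen:thm} round-by-round with a time-indexed confidence level $\delta_t = 1/t^2$, so that summing the two-sided failure probabilities (one side for arm $i$ deviating up, one for the optimal arm deviating down) over $t$ gives $2\sum_{t\ge 1} t^{-2} = \pi^2/3$ per arm; adding the unit regret of the single mandatory initial pull of each arm contributes the extra $+1$, and summing over the $K$ arms produces $K(\pi^2/3+1)$. Collecting the two pieces yields exactly the stated bound.

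The main obstacle I anticipate is the first step: unlike the stationary case, the rested increasing-reward regret in Equation~\eqref{eq:cum_regret} is a difference of \emph{cumulative} reward profiles rather than a clean $\sum_i \Delta_i\, n_i(T)$, so the reduction to ``(number of suboptimal pulls)$\times$(per-pull gap)'' must be justified using the pull-count conservation $\sum_i n_i(T) = \sum_i n_i^\ast(T)$ together with the monotonicity of $\mu_{i,s}$ in $s$, which is what lets the over-explored arms carry the factor $c_i$ (rather than the stable reward of the optimal arm). A secondary technical point is reconciling the fixed choice $\delta = 1/T$ used for the leading $\ln T$ term with the time-varying $\delta_t = 1/t^2$ used to make the failure-probability series converge to $\pi^2/6$; both are consistent with $\delta = O(1/T)$, but the bookkeeping must be stated carefully so the constants match the claim.
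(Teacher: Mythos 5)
Your proposal follows essentially the same route as the paper's proof: reduce the regret before $\nu_1$ to $c_i$-weighted excess pull counts via $a_i s + b_i \leq c_i$, invoke Lemma~\ref{sec:regret:number} with $\delta = 1/T$ to get the leading $\sum_i c_i \, 2048\ln(T)/\Delta_{\min}^2$ term, and charge concentration failures through the standard UCB bookkeeping to obtain $K\left(\pi^2/3+1\right)$. The only differences are cosmetic bookkeeping --- the paper extracts the $+1$ from the ceiling $\left\lceil 2048\ln(T)/\Delta_{\min}^2 \right\rceil$ rather than from the initial pull, and obtains the $\pi^2/3$ from a triple sum over failure events $(t,s,h)$ rather than your round-indexed union bound --- so there is nothing substantive to flag.
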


\begin{proof}
The expected regret can be bounded by
$$\EE{R(T)} \leq \EE{\sum_{i:n_i(T)\geq n_{i^*}(T)} \left|\sum_{s=1}^{n_{i^*}(T)} \mu_{i,s} - \sum_{s=1}^{n_{i}(T)} \mu_{i,s}\right|}\,,$$
since that there is no regret when $n_i(T) < n_i^*(T)$. And we have 
\begin{align*}  \EE{R_i(T)}\!\leq\!\EE{\left|\sum_{s=n_{i^\ast}(T)}^{n_i(T)}(a_i\!\cdot\! s \!+\! b_i)\right|} \!\overset{a_i\cdot s + b_i\leq c_i}{\leq}\! c_i \!\left[ \EE{(n_i(T) \!-\! n_{i^*}(T))}\right]\!.
\end{align*}
By $n_i(T) \leq \frac{2048 \ln(1/\delta)}{\Delta_{min}^2}$, for any positive integer $M$, 
\begin{align*}
& n_i(T) \leq M + \sum_{t=M+1}^{T} \bOne{A_t = i , n_i(t-1) \geq M } \\
\leq\,&  M + \sum_{t=M+1}^{T} \mathbbm{1}\Bigg\{ UCB_{i^*}(t-1,\delta) \leq UCB_i(t-1,\delta),
n_i(t-1) \geq M\Bigg\} \\
\leq\,&  M + \sum_{t=M+1}^{T} \mathbbm{1}\Bigg\{\min_{0<s<t}\left(\hat{\mu}_{i^\ast}(s) + 16 \sqrt{\frac{2 \ln(1/\delta)}{n_{i^*}(s)}}\right) \\
\leq\,& \max_{M\leq h <t}\left(\hat{\mu}_{i}(h) + 16 \sqrt{\frac{2 \ln(1/\delta)}{n_{i}(h)}} \right) \Bigg\} \\
\leq\,& M \!+\! \sum_{t=M+1}^{T} \sum_{s=1}^{t-1} \sum_{h=M}^{t-1} \Bigg[ \hat{\mu}_{i^\ast}(s) \!+\! 16 \sqrt{\frac{2 \ln(1/\delta)}{n_{i^*}(s)}} \!\leq\! \hat{\mu}_{i}(h) + 16 \sqrt{\frac{2 \ln(1/\delta)}{n_{i}(h)}} \Bigg] \,.
\end{align*}
Note that $\hat{\mu}_{i^*}(s) + 16 \sqrt{\frac{2 \ln(1/\delta)}{n_{i^*}(s)}} \leq \hat{\mu}_{i}(h) + 16 \sqrt{\frac{2 \ln(1/\delta)}{n_{i}(h)}}$ holds when at least one of the following three inequalities is satisfied.

$\mu_{i^*}(s) \geq  \hat{\mu}_{i^*}(s) + 16\sqrt{\frac{2\ln(1/\delta)}{n_i(s)}} \quad(d) $

$\mu_i(h) \leq \hat{\mu}_{i}(h) - 16\sqrt{\frac{2\ln(1/\delta)}{n_i(h)}} \quad(e)$

$\mu_{i^*}(s) < \mu_{i}(h) + 32\sqrt{\frac{2\ln(1/\delta)}{n_i(h)}} \quad(f) $

\noindent Since (f) does not occur for any $s\!\leq\! t\!-\!1$ when $n_i(t\!-\!1)\!\geq\! M$, we have 
\begin{align*}
    n_i(T) \leq M &+ \sum_{t=M+1}^{T} \sum_{s=1}^{t-1} \sum_{h=M}^{t-1} \bigg[ \PP{\mu_{i^*}(s)\geq  \hat{\mu}_{i^*}(s) + 16\sqrt{\frac{2\ln(1/\delta)}{n_i(s)}}}\\& + \PP{\mu_i(h) \leq \hat{\mu}_{i}(h) - 16\sqrt{\frac{2\ln(1/\delta)}{n_i(h)}}} \bigg]\,.
\end{align*}
By concentration inequalities, the probabilities of (d) and (e) are less than $e^{-4\ln(T)} = T^{-4}$.
With $M= \lceil \frac{2048 \ln(T)}{\Delta_{min}^2} \rceil$, $n_i(T) \leq \frac{2048 \ln(T)}{\Delta_{min}^2} + 1 + \frac{\pi^2}{3}$. The regret bound of UCB is $\EE{R(\nu_1)\bOne{ F_1}} = \textrm{E}_{UCB}[\nu_1]m\! \leq \! \sum_{i:n_i(\nu_1)\geq n_{i^*}(\nu_1)} c_i n_i(\nu_1)\leq \sum_{i:n_i(T)\geq n_{i^*}(T)} c_i \frac{2048 \ln(T)}{\Delta_{min}^2} \! + \! K(\frac{\pi^2}{3} \! + \! 1).$
\end{proof}

\begin{lemma}[Regret bound for $F_1^{c}$]
\label{lem:regret:third} The probability of $\PP{F_1^{c}}$ can be bounded by $\PP{F_1^{c}} =\PP{\tau_1' < \nu_1} \leq \frac{2}{T}$.
\end{lemma}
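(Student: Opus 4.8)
The plan is to read the event $F_1^{c} = \{\tau_1' \le \nu_1\}$ as a \emph{false alarm}: the change-detection test fires at some time $t+1 \le \nu_1$, i.e.\ while arm $1$ is still in its pre-change, linearly-increasing regime. Throughout $[0,\nu_1]$ the two sliding windows $w_1$ and $w_2$ observe rewards generated by one and the same linear trend, so $\mu_{w_1,1}(t+1) = \mu_{w_2,1}(t+1)$, and the detection statistic $|\hat{\mu}_{w_1,1}(t+1) - \hat{\mu}_{w_2,1}(t+1)|$ is zero-mean and sub-Gaussian. This is precisely the ``change does not happen'' regime analysed in Appendix \ref{proof:CD}, so I can quote that analysis directly rather than redo it.

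First I would invoke the concentration bound \eqref{change_not_happen}, which controls the per-step false-alarm probability by $\PP{|\hat{\mu}_{w_1,1}(t+1) - \hat{\mu}_{w_2,1}(t+1)| \ge \gamma/2} \le \delta_0$, where $\delta_0 = 2\exp\!\bigl(-\omega\varepsilon_1^2 / [2(14+12/|\omega-1|)^2]\bigr)$ with $\varepsilon_1 = \gamma/2$. Since during the pre-change phase both windows track the same linear function, the statistic never acquires a nonzero mean before $\nu_1$, and the probability that it ever crosses $\gamma/2$ is governed by this single concentration event; this recovers the estimate $\PP{F_1^{c}} \le \delta_0$ already recorded at the start of Appendix \ref{proof:regret}.

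It then remains to turn $\delta_0$ into the advertised $2/T$. Writing $\delta_0 = 2\exp(-A)$ with $A = \omega\varepsilon_1^2/[2(14+12/|\omega-1|)^2]$, and using the relation between $\gamma$, $\omega$ and $\delta$ fixed in Proposition \ref{detect standard} (so that the exponential factor obeys $\exp(-A) \le \delta$), the standing hypothesis $\delta \le 1/T$ of Theorem \ref{sec:regret:thm} yields $\PP{F_1^{c}} \le \delta_0 = 2\exp(-A) \le 2\delta \le 2/T$, which is the claim. The factor $2$ is exactly the prefactor in $\delta_0$ and need not be chased further.

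I expect the main obstacle to be the \emph{uniformity in time} of the argument: strictly, $F_1^{c}$ is a union over all detection checks $t+1 \le \nu_1$, so one must justify that the single per-step bound $\delta_0$ already absorbs this union. I would handle this either by exploiting that the common linear trend makes the two window predictions simultaneously concentrated (so one event dominates), or, more conservatively, by bounding the number of pre-change detection checks on arm $1$ via Lemma \ref{sec:regret:number} — at most $O(\ln T/\Delta_{\min}^2)$ — and folding this count into the choice of $\delta$ so that the union bound stays $O(1/T)$. Making this step rigorous rather than heuristic is the delicate part; the surrounding algebra relating $\delta_0$, $\gamma$, $\omega$ and $\delta$ is routine.
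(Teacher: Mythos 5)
Your reading of $F_1^{c}$ as a pre-change false alarm, with both windows sharing a common mean $\mu_{w_1,i}(t+1)=\mu_{w_2,i}(t+1)$, is exactly the paper's starting point; the gap is in the step that converts the quoted bound into $2/T$. Equation \eqref{change_not_happen} controls the false-alarm probability by $\delta_0 = 2\exp\bigl(-\omega(\gamma/2)^2/\bigl[2(14+\tfrac{12}{|\omega-1|})^2\bigr]\bigr)$, whereas $\delta$ enters the analysis through the \emph{single-window} prediction concentration, whose exponential has scale $\exp(-\omega\varepsilon^2/128)$. Because the two sub-Gaussian scales differ, $\delta_0$ is polynomially, not linearly, related to $\delta$: with the paper's constants $\delta_0 \approx 2\,\delta^{64/(14+12/|\omega-1|)^2}$, i.e.\ roughly $2\delta^{1/3}$ for moderate-to-large $\omega$. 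Hence the chain ``$\delta_0 \le 2\delta \le 2/T$'' fails: under $\delta \le 1/T$ your route only gives a bound of order $T^{-1/3}$. Proposition \ref{detect standard}'s relation between $\gamma$, $\omega$, $\delta$ does not rescue this; plugging that choice of $\gamma$ into the exponent of $\delta_0$ yields $\ln(2/\delta)/4$, again a fractional power of $\delta$. Indeed, when the paper itself must relate $\delta_0$ to $\delta$ (part (iii) of the proof of Lemma \ref{sec:regret:inductive}), it only obtains $\delta_0 \le L\delta \le 2L/T$ with $L$ a constant up to $\ln T$ and an extra condition on $\omega$ --- which is precisely why the $2KL$ terms appear in the regret bound. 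So the conversion you call ``routine'' is the one step that genuinely cannot be done as stated.

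The paper's own proof of this lemma sidesteps $\delta_0$ entirely. Using $\mu_{w_1,i}=\mu_{w_2,i}$ before the change, it writes $\hat{\mu}_{w_1,i}-\hat{\mu}_{w_2,i} = (\hat{\mu}_{w_1,i}-\mu_{w_1,i}) + (\mu_{w_2,i}-\hat{\mu}_{w_2,i})$, union-bounds the two per-window deviation events at level $\gamma/4$, and bounds each by $\delta$ via the single-window least-squares concentration (implicitly assuming $\gamma$ is large enough for the $\gamma/4$-deviation to have probability at most $\delta$); this gives $\PP{F_1^{c}} < 2\delta \le 2/T$ directly. To repair your argument you should do the same: work with the two per-window errors and let $\delta$ be defined by that concentration, rather than passing through the packaged two-window bound. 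Your final worry about uniformity over detection checks is legitimate --- $F_1^{c}$ is a union over all checks before $\nu_1$ --- but note that the paper's proof silently makes the same single-check simplification; your proposed fix via Lemma \ref{sec:regret:number} would be more honest, though it introduces an extra multiplicative count (of order $\ln T$) that the stated bound $2/T$ does not accommodate without shrinking $\delta$ accordingly.
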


\begin{proof}
Since $\mu_{w_1, i} = \mu_{w_2, i}$ change not happened, 
\begin{align*}
\PP{F_1^{c}} &= \PP{\tau_1' \!<\! \nu_1} =\PP{\left| \hat{\mu}_{w_1, i} \!-\! \hat{\mu}_{w_2, i} \right| \!>\! \frac{\gamma}{2}}_{\text{when change has not happened}}\\
&=\PP{\left| \hat{\mu}_{w_1, i} - \mu_{w_1, i} + \mu_{w_1, i} - \mu_{w_2, i} + \mu_2 - \hat{\mu}_{w_2, i} \right| > \frac{\gamma}{2} } \\
&= \PP{\left| (\hat{\mu}_{w_1, i} - \mu_{w_1, i}) + (\mu_{w_2, i} - \hat{\mu}_{w_2, i}) \right| > \frac{\gamma}{2} }  \\
&\leq \PP{\left| \hat{\mu}_{w_1, i} - \mu_{w_1, i} \right| > \frac{\gamma}{4} } + \PP{\left|\mu_{w_2, i} - \hat{\mu}_{w_2, i} \right| > \frac{\gamma}{4} }< 2\delta\,.
\end{align*}
With the assumption that $\delta \leq \frac{1}{T}$, we obtain $T\cdot \PP{F_1^c} \leq 2$.
\end{proof}

\begin{lemma}[Regret bound from $\nu_1$ to $T$]
\label{sec:regret:inductive}
\begin{equation}
\begin{aligned}
&\EE{R(T) - R(\nu_1)} \\
&\leq \EE{R(T) \!-\! R(\tau_1') | F_1D_1} \!+\! \EE{R(\tau_1') \!-\! R(\nu_1) | F_1D_1}\!+\! T(1\!-\!P(F_1 D_1))\\
&\leq \sum_{i: n_i(T)>n_{i}^\ast(T)} c_i \frac{2048 \ln(T)}{\Delta_{\min}^2} + K\bracket{\frac{\pi^2}{3} + 1} + K\omega +2KL\,,
    \end{aligned}
\end{equation}
where $L$ is a constant strictly less than $\ln T$.
\end{lemma}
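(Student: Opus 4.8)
The plan is to bound $\EE{R(T) - R(\nu_1)}$ by splitting the second stage $[\nu_1, T]$ at the detection time $\tau_1'$ and conditioning on the favorable event $F_1 D_1 = \{\nu_1 < \tau_1' \leq \nu_1 + \omega\}$, under which the first converging point is detected after it actually occurs but with a delay of at most one window length $\omega$. First I would establish the displayed decomposition
\begin{equation*}
\EE{R(T) - R(\nu_1)} \leq \EE{R(\tau_1') - R(\nu_1) \mid F_1 D_1} + \EE{R(T) - R(\tau_1') \mid F_1 D_1} + T\left(1 - \PP{F_1 D_1}\right)\,,
\end{equation*}
using nonnegativity of the per-round regret to drop the indicators inside the conditional expectations and bounding the contribution of the complement $(F_1 D_1)^c$ by the trivial per-round regret of at most one, accumulated over $T$ rounds.

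For the detection-delay term $\EE{R(\tau_1') - R(\nu_1) \mid F_1 D_1}$, I would use that $D_1$ forces $\tau_1' - \nu_1 \leq \omega$, so at most $\omega$ rounds elapse between the true change and its detection; bounding each such round's regret by the reward ceiling $c_i$ and accumulating over the at most $K$ converging points that the algorithm must successively detect yields the $K\omega$ term. For the post-detection term $\EE{R(T) - R(\tau_1') \mid F_1 D_1}$, the key observation is that upon detection the records of the affected arm are re-initialized, so the run on $[\tau_1', T]$ is again a UCB process whose remaining converging points are $\nu_2,\ldots,\nu_K$; I would therefore apply Lemma \ref{sec:regret:UCB} inductively to this sub-horizon, reproducing the bound $\sum_{i: n_i(T) > n_i^\ast(T)} c_i \frac{2048\ln(T)}{\Delta_{\min}^2} + K\left(\frac{\pi^2}{3}+1\right)$, with the induction unrolling over the at most $K$ stages and thereby introducing the $K$ multipliers on the constant terms.

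For the failure term $T(1 - \PP{F_1 D_1})$, I would apply the union bound $1 - \PP{F_1 D_1} \leq \PP{F_1^c} + \PP{D_1^c}$. Lemma \ref{lem:regret:third} already supplies $T\,\PP{F_1^c} \leq 2$, so it remains to control $\PP{D_1^c} = \PP{\tau_1' > \nu_1 + \omega}$, the probability that detection is delayed beyond one window. Here I would invoke the change-detection concentration of Proposition \ref{detect standard} to argue that, once $a_i\cdot\omega > \gamma/2$, a missed detection over the window has exponentially small probability, so that $T\,\PP{D_1^c}$ is bounded by a constant $L$ strictly below $\ln T$, contributing $2KL$ after summing over the $K$ stages.

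I expect the main obstacle to be the post-detection induction step: because a detected change re-initializes only the affected arm rather than restarting the whole algorithm, the sub-horizon $[\tau_1', T]$ is not literally a fresh instance, and I would need to argue carefully that the UCB index comparison underlying Lemma \ref{sec:regret:number} and Lemma \ref{sec:regret:UCB} still applies with the same gap $\Delta_{\min}$ and the same $\ln T$ coefficient across stages. A secondary difficulty is quantifying the delayed-detection probability $\PP{D_1^c}$ sharply enough that $T\,\PP{D_1^c}$ collapses to the constant $L < \ln T$, which requires relating the window-comparison statistic to its concentration bound uniformly over all possible delay lengths.
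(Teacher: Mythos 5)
Your proposal follows essentially the same route as the paper's proof: the identical three-part decomposition conditioned on $F_1D_1$, the bound of $\omega$ on the detection-delay term, the recursive unrolling over the $K$ change points ending with a stationary-UCB bound on the final stage (via Lemma \ref{sec:regret:UCB}), and a concentration bound on the failure term $T(1-\PP{F_1D_1})$. The only differences are cosmetic — you split $1-\PP{F_1D_1}$ by a union bound into $\PP{F_1^c}+\PP{D_1^c}$ where the paper bounds it directly by the missed-detection probability $\delta_0$ and then establishes $\delta_0 \leq L\delta \leq 2L/T$ through an explicit relation between $\delta_0$, $\delta$ and $\omega$, which is exactly the quantitative step you flag as your second difficulty; your first difficulty (the post-detection sub-horizon not being a literally fresh instance) is a real subtlety that the paper's recursion also passes over without further argument.
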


\begin{proof} 
The regret of period [$\nu_1$, T] can be decomposed by

\begin{align*}
&\EE{R(T) - R(\nu_1)} \leq \EE{R(T) - R(\nu_1) | F_1D_1} + T\cdot(1-P(F_1 D_1))\\
&\leq \underbrace{\EE{R(T) \!-\!R(\tau_1') | F_1D_1}}_{\textbf{(i)}} \!+\! \underbrace{\EE{R(\! \tau_1') \! -\!  R(\! \nu_1) | F_1D_1}}_{\textbf{(ii)}} \! +\!   \underbrace{T(1-\PP{F_1D_1})}_{\textbf{(iii)}}\,.
\end{align*}

\textbf{Part(iii)}:
$\PP{F_1D_1} = \PP{\nu_1 < \tau_1' < \nu_1+\omega}\,.$
If a change occurs, the algorithm detects it within the window $\omega$. By inequality \eqref{change_happens},
\begin{align*}
    \PP{F_1D_1} &= \mathbb{P}\Bigg( \left|\hat{\mu}_{w_1, i}(t+1) - \hat{\mu}_{w_2, i}(t+1)\right| > \frac{\gamma}{2} \Bigg) > 1- \delta_0\,.
\end{align*}
When change has not happened, we have
\begin{equation*}
    \begin{aligned}
    \PP{\abs{\hat{\mu}_i - \mu_i}\geq \frac{\gamma}{2}} &\leq \exp\{-\frac{\omega(\frac{\gamma}{2})^2}{128}\} = \delta\,,
    \end{aligned}
\end{equation*}
\begin{equation*}
    \begin{aligned}
    \PP{\abs{\hat{\mu}_{w_1, i} - \hat{\mu}_{w_2, i}} \geq \frac{\gamma}{2}} &\leq \exp\{-\frac{\omega(\frac{\gamma}{2})^2}{2(14+\frac{12}{|\omega-1|})^2} \} = \delta_0\,.
    \end{aligned}
\end{equation*}
Then $\delta_0 \! =\!  \delta^{\frac{28}{7+\frac{6}{|\omega-1|^2}}}$, and when $|\omega\! -\! 1| \! \geq \! \sqrt{\frac{\ln(L\delta)}{\frac{14}{3} \ln\delta - \frac{7}{6} \ln(L\delta)} }$, $\delta_0 \! \leq \! L\delta$ holds. When $T$ is sufficiently large, $\frac{1}{\delta}\geq T$ holds naturally. Thus,
$$1-\PP{F_1 D_1} < \delta_0 \leq L\delta \leq \frac{2L}{T}\,.$$
Therefore, part(iii) $\leq 2L < 2\ln(T)$ holds.

\textbf{Part(ii)}:
\begin{align*}
   & \EE{R(\tau_1')-R(\nu_1) | F_1 D_1}
 \,\leq \EE{\tau_1'-\nu_1 | F_1 D_1}\\
    &\leq \EE{\tau_1'-\nu_1 | \nu_1 < \tau_1'\leq\nu_1 + \omega}\leq \omega\,.
\end{align*}

\textbf{Part(i)}:
Let $\widetilde{E} \! =\!  \EE{\cdot | F_1 D_1}$, then
$\EE{R(T)\! -\! R(\tau_1') | F_1 D_1} = \widetilde{E}(R(T)\!  - \! R(\tau_1)) \leq \widetilde{E}(R(T - \tau_1')).$
By the recursive method, we have
\begin{align*}
    &\widetilde{E}(R(T - \tau_1'))\leq \widetilde{E}(R(T - \tau_1') | F_2D_2) + T [1-P(F_2D_2)]\\
    &= \widetilde{E}(R(T - \tau_2') | F_2D_2) + \widetilde{E}(R(\tau_2'-\nu_2) | F_2D_2) + T [1-P(F_2D_2)]\\
    &= \widetilde{\widetilde{E}}(R(T - \tau_2')) + \widetilde{E}(R( \tau_2' - \nu_2) | F_2D_2) + T\cdot [1-P(F_2D_2)]\\
    &\leq \widetilde{\widetilde{E}}(R(T - \tau_2')) + \omega + 2L \leq \cdots \leq \EE{R(T-\tau_K')} + K\omega + 2KL\,.
\end{align*}
\end{proof}


The stage of [$\tau_K'$, T] is a process similar to standard UCB with stationary reward, and its regret can also be bounded by
\begin{equation*}
    \begin{aligned}
         \EE{R(T-\tau_K')}< \sum_{i: n_i(T)>n_{i}^\ast(T)} c_i \frac{2048 \ln(T)}{\Delta_{\min}^2} + K\left(\frac{\pi^2}{3} + 1\right)\,.
    \end{aligned}
\end{equation*}
\begin{equation}
    \begin{aligned}
    \label{eq. regret_AfterNu1}
        \EE{R(T \! - \! \tau_1')| F_1 D_1}
        \! \leq \! \sum_{\substack{i: n_i(T)\\
        >n_{i}^\ast(T)}} c_i \frac{2048 \ln(T)}{\Delta_{\min}^2} \! +\!  K\left(\frac{\pi^2}{3}\!  +\!  1\right) \! +\!  K\omega \! +\!  2KL\,.
    \end{aligned}
\end{equation}
By equation \eqref{eq:decomposition} and Lemma \ref{sec:regret:UCB}, \ref{lem:regret:third}, \ref{sec:regret:inductive}, we obtain Theorem \ref{sec:regret:thm}.

\begin{equation*}
    \begin{aligned}
        \label{eq: final}
        &\EE{R(T)}= \EE{R(T)\bOne{F_1}} + \EE{R(T) \bOne{F_1^{c}} }\\
    &\leq \EE{R(\nu_1)\bOne{F_1}} + T\cdot \PP{F_1^{c}} + \EE{R(T) - R(\nu_1)} \\
    &= \sum_{i: n_i(T)>n_{i}^\ast(T)} c_i \frac{4096 \ln(T)}{\Delta_{\min}^2} + K\left(\frac{\pi^2}{3} + 1\right) + K\omega + 2KL + 2\,.
    \end{aligned}
\end{equation*}

\section{Parameter Setting of Baselines}\label{sec:app:para}
We choose parameters of compared baselines algorithms as follows:
\begin{itemize}
    \item \textbf{KL-UCB}: $c=3$ according to \cite{garivier2011kl}.
    \item \textbf{Rexp3}: $V_T=K$, $\gamma=\min \left\{1, \sqrt{\frac{K \log K}{(e-1) \Delta_T}}\right\}$, and \\ $\Delta_T=\left\lceil(K \log K)^{1 / 3}\left(T / V_T\right)^{2 / 3}\right\rceil$ according to \cite{besbes2014stochastic}.
    \item \textbf{Ser4}: $\delta\!=\!\frac{1}{T}, \epsilon\!=\!\frac{1}{K T}$, and $\phi\!=\!\sqrt{\frac{N}{T K} \log (K T)}$ according to \cite{allesiardo2017non}.
    \item \textbf{SW-TS}: $\beta\!\!=\!1 / 2$ and window $\tau\!=\!T^{1-\beta}\!=\!\sqrt{T}$ according to \cite{trovo2020sliding}.
    \item \textbf{SW-UCB}: sliding-window $\tau=4 \sqrt{T \log T}$ and constant $\xi=0.6$ according to \cite{garivier2011upper}.
    \item \textbf{SW-KL-UCB}: $\tau=\sigma^{-4 / 5}$ according to \cite{combes2014unimodal}.
    \item \textbf{R-ed-UCB}: window size $\epsilon\!=\!1/4$ for synthetic experiments and $\epsilon\!=\!1/32$ for real-world experiments according to \cite{metelli2022stochastic}.
\end{itemize}

\end{document}